\documentclass{article}
\usepackage[preprint]{neurips_2024}

\usepackage[utf8]{inputenc}
\usepackage[T1]{fontenc}
\usepackage{amsmath,amssymb,amsfonts,amsthm}
\usepackage{booktabs}
\usepackage{graphicx}
\usepackage{mathtools}
\usepackage{microtype}
\usepackage{natbib}
\usepackage{subcaption}
\usepackage{array}
\usepackage{url}
\usepackage{xcolor}
\usepackage{hyperref}
\usepackage{enumitem}
\usepackage{mdframed}
\usepackage{algorithm}
\usepackage{algpseudocode}

\newtheorem{theorem}{Theorem}[section]
\newtheorem{proposition}[theorem]{Proposition}
\newtheorem{lemma}[theorem]{Lemma}

\theoremstyle{definition}
\newtheorem{definition}{Definition}

\theoremstyle{remark}

\newmdenv[
  backgroundcolor=gray!5,
  linecolor=black,
  linewidth=0.6pt,
  roundcorner=2pt,
  innertopmargin=6pt,
  innerbottommargin=6pt,
  innerleftmargin=8pt,
  innerrightmargin=8pt,
  skipabove=8pt,
  skipbelow=8pt
]{anchorresult}

\newcommand{\E}{\mathbb E}

\newcommand{\R}{\mathbb R}
\newcommand{\D}{\mathcal D}

\newcommand{\G}{\mathcal G}

\newcommand{\U}{\mathcal U}
\newcommand{\M}{\mathcal M}

\newcommand{\Wass}{W_1}
\newcommand{\MDE}{\operatorname{MDE}}
\newcommand{\argmin}{\operatorname*{arg\,min}}

\title{Choosing Online Experiment Designs under Interference in Ads, Recommendations, and Member-Experience Systems}

\author{%
  Prashant Shekhar\thanks{Corresponding author.} and Caroline Howard \\
  \textit{\fontsize{10}{22}\selectfont Department of Mathematics} \\
  \textit{\fontsize{10}{22}\selectfont Embry-Riddle Aeronautical University} \\
  \textit{\fontsize{10}{22}\selectfont Daytona Beach, FL, USA}
}

\begin{document}
\maketitle

\begin{abstract}
Online experiments in ads, recommendation, and member-experience systems are often planned before the dominant interference mechanism is known. A treatment may propagate through budgets, inventory, producer exposure, graph spillovers, or temporal carryover, making the randomization design itself a statistical decision. We formulate this problem as robust design selection over uncertain exposure mechanisms. Given a finite catalog of six implementable designs, the selector compares each design by worst-case planning risk over an ambiguity set. The risk combines exposure bias, assignment-unit variance, minimum detectable effect, contamination or carryover, operational cost, and estimand mismatch. For theoretical justification, the paper develops a geometry-aware guarantee, stating that design bias is bounded by Wasserstein distance to the launch exposure distribution, and this penalty is minimax tight under Lipschitz exposure response. We also prove finite-catalog approximation and a robust selector theorem with excess-risk control, exact recovery under separation, and certified shortlists when the risk surface is flat. Empirically, the same selector gives different recommendations across samples from public datasets. It selects \textit{user randomization} on Criteo ads with dimensionless robust risk \(1.295\), \textit{switchbacks} on Open Bandit \texttt{bts/men} with risk \(2.105\), and \textit{cluster randomization} on KuaiRand with risk \(2.240\). The Open Bandit case stresses known but uneven logging support, with propensities from \(0.00006\) to \(0.594\) and a \(5.17\%\) IPS effective-sample share. A controlled regime sweep further shows phase transitions from user randomization to mixed randomization, cluster randomization, and switchbacks as exposure mechanisms change. Overall, the paper contributes an interference-aware experiment design framework based on mechanism-robust design decisions, where the output is either a justified design choice or an uncertainty shortlist.
\end{abstract}

\section{Introduction}

Online controlled experiments are central to product and machine-learning decisions in large platforms \citep{kohavi2013online,imbens2015causal}. Their design is least ambiguous when treatment affects only the assigned unit. However, modern ads, recommendation, and member-experience systems rarely have that structure. A ranking change can alter producer exposure, an ad-policy change can move advertiser budgets across inventory, and a member-experience treatment can affect future platform state. Before an experiment is launched, the platform may not know whether the relevant exposure mechanism is graph spillover, shared marketplace state, budget pacing, temporal carryover, or a combination of these channels. To model this, the current research formulates online experiment design under interference as a robust design-selection problem over uncertain exposure mechanisms. The input is a finite catalog of implementable designs, such as user randomization, graph or market clustering, switchbacks, budget splits, saturation designs, and mixed randomization. The ambiguity set contains plausible exposure mechanisms. Each candidate design is evaluated by the exposure distribution it induces under each mechanism and by the operational quantities needed to run it. The output is not a generic recommendation to ``use switchbacks'' or ``cluster users.'' Rather, it is a robust choice, or an uncertainty shortlist, for the specific ambiguity set and design catalog. This perspective is complementary to offline marketplace decision-support work, which argues that logged replay and off-policy evidence should often culminate in online validation rather than direct launch when propensities, bidder response, and interference remain unresolved \citep{shekhar2026decision}. The present paper studies the downstream design question that follows from that conclusion. Once validation is required, the platform still has to choose a randomization design before knowing which exposure mechanism will dominate.

This focus changes the role of interference theory. Prior work often asks how to analyze a chosen design. Here the primary question is which design should be chosen before the dominant mechanism is known. A design with low variance can be poor if it induces exposure far from launch. A design that better matches launch exposure can be poor if it has too few assignment units or excessive carryover. The framework makes these tradeoffs explicit through a planning risk that combines exposure bias, assignment-unit variance, minimum detectable effect, contamination or carryover risk, operational cost, and estimand mismatch. Figure~\ref{fig:regime-transition} presents this central phenomenon through a controlled (simulated) mechanism-sweep diagnostic using the proposed design catalog and normalized risk components. The sweep varies a one-dimensional exposure-mechanism intensity \(\gamma\), moving from weak row-local interference to mixed spillover, clustered spillover, and carryover-dominant regimes. Holding the catalog and risk weights fixed, the lowest-risk design changes as the exposure mechanism changes. User randomization is attractive when interference is weak, mixed randomization wins in a moderate spillover regime, cluster randomization wins when shared exposure dominates, and switchbacks become preferable when carryover is the main threat. The figure grounds the central research agenda that design choice should be made as a robust selection problem over uncertain exposure mechanisms.

\begin{figure}[t]
\centering
\includegraphics[width=0.88\linewidth]{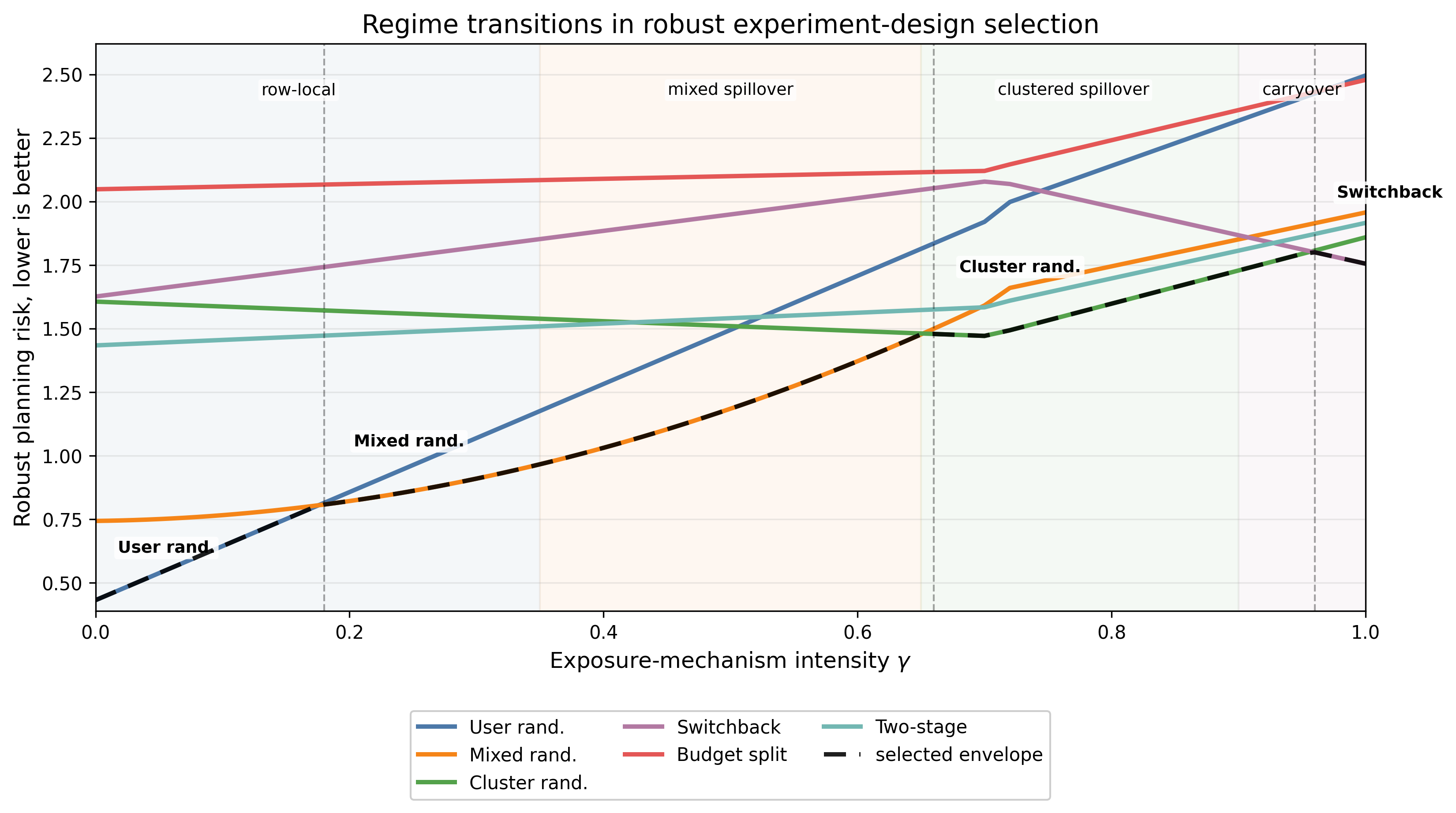}
\caption{\small Regime-transition generated from the proposed experiment-design framework. The diagnostic uses the same six implementable designs (shown at the bottom), normalized risk components, and risk weights used in the empirical selector, while sweeping a controlled exposure-mechanism intensity \(\gamma\) from weak row-local interference to mixed spillover, clustered spillover, and carryover-dominant interference. The dashed black envelope marks the lowest-risk design at each value of \(\gamma\). The crossings show why no single randomization scheme is uniformly preferred.}
\label{fig:regime-transition}
\end{figure}

The paper makes four contributions in service of this formulation. \textbf{First}, it defines a deployable robust design selector whose inputs are a design catalog, an exposure ambiguity set, historical logs, planning weights, and a shortlist tolerance. \textbf{Second}, it introduces exposure geometry as the common scale for comparing designs that randomize different objects. Under Lipschitz response uncertainty, the Wasserstein distance between design-induced and launch exposure distributions controls design bias, and the bound is worst-case tight. \textbf{Third}, it justifies finite design catalogs through an approximation proposition for catalog coverage in design space. \textbf{Fourth}, it proves that the selector in Algorithm~\ref{alg:robust-selector} has excess-risk control, exact recovery when robust risks are separated, and certified shortlists when the risk surface is too flat for a unique recommendation. The empirical analysis uses samples from public ads and recommendation datasets with semi-synthetic interference layers to validate the same algorithmic object.

\section{Related Work}

\noindent \textbf{Experimentation infrastructure}:
Large-scale experimentation systems provide the operational base for this research. Industry practice covers metric definition, guardrails, assignment, and platform execution \citep{kohavi2013online}. CUPED-style variance reduction and sequential monitoring improves sensitivity and governance after a design has been chosen \citep{deng2013cuped,kharitonov2015sequential}. Assignment systems add implementation constraints such as deterministic hashing, independence across concurrent experiments, and stable treatment assignment \citep{li2022assignvariants}. These works motivate the cost, feasibility, and minimum detectable effect terms in our planning risk, but they usually take the assignment unit as fixed.

\noindent \textbf{Interference-aware causal designs}:
Potential-outcomes work under interference replaces the no-interference assumption with exposure mappings, partial interference, or general interference structures \citep{hudgens2008interference,tchetgen2012causal,aronow2017general}. Design-based limits under arbitrary interference explain why ordinary A/B reasoning can be fragile when exposure is not localized or measured \citep{basse2018limitations}. Graph cluster randomization, two-stage designs, saturation designs, mixed Bernoulli-cluster designs, unknown-network methods, and diffusion randomization each provide valid tools for particular interference structures \citep{ugander2013graph,eckles2017design,ugander2020randomized,basse2018two,viviano2025causalclustering,jiang2024mixture,yu2022unknownnetwork,fatemi2024cascade}. We build on this literature by treating the exposure mechanism itself as uncertain and by comparing designs through the exposure distributions they induce.

\noindent \textbf{Temporal, marketplace, and recommender mechanisms}:
Switchback experiments are useful when treatments perturb shared state over time, and formal analyses account for temporal dependence, carryover, and multiple randomization units \citep{bojinov2023switchback,masoero2023efficient,missault2025switchback,bajari2021multiple}. Two-sided platforms add market-balance, supply-competition, budget, and pacing channels, motivating demand-side, supply-side, budget-split, and multiple-randomization designs \citep{johari2022experimental,li2022interference,liu2020budget,holtz2020limiting,holtz2020pricing}. Recent first-price pacing-equilibrium work formalizes why ads experiments can trade off bias, variance, and market behavior \citep{liao2023fppe,leonenkov2025fppe}. Recommendation systems create related producer-side and seller-side exposure channels even under independent viewer assignment \citep{nandy2021recommender,wang2025producer,zhu2024seller}. These domain literatures supply candidate mechanisms for our ambiguity set rather than a single universally correct design.

\noindent \textbf{Reproducible validation and gap}:
Public ads and recommendation datasets support reproducible stress tests for design-selection methods. We use Criteo, Open Bandit, KuaiRand, and MovieLens in the empirical analysis \citep{diemert2018criteo,saito2021open,gao2022kuairand,chen2022kuairand,harper2015movielens}; related public corpora such as MIND provide additional recommendation substrates for future validation \citep{wu2020mind}. The gap is that existing work gives many sound designs and estimators for specified mechanisms, but rarely makes the choice of design a robust decision problem over uncertain mechanisms. This paper fills that gap by representing each candidate design through its exposure distribution, comparing it to launch exposure by Wasserstein geometry, and selecting designs by worst-case planning risk.

\section{Proposed framework}

Let \(i=1,\ldots,n\) index measurement units such as users, members, sessions, advertisers, producers, or items. Let \(Z_i\in\{0,1\}\) denote the treatment assignment for unit \(i\), and let \(Z=(Z_1,\ldots,Z_n)\) denote the full assignment vector. Under no interference, unit \(i\)'s potential outcome is \(Y_i(Z_i)\). Under interference, the potential outcome can depend on the full assignment vector, written as \(Y_i(Z)\). This dependence is too rich to estimate directly, so experiment design usually imposes an exposure mapping.

\begin{definition}[Exposure mapping]
An exposure mapping is a function \(e_i(Z; \G, H_t)\) that summarizes the part of the assignment vector relevant for unit \(i\). The arguments may include a network or marketplace graph \(\G\), historical platform state \(H_t\), and time \(t\). A potential outcome is exposure-compatible if
\[
Y_i(Z) = Y_i\big(e_i(Z; \G, H_t)\big).
\]
\end{definition}

The exposure mapping can encode many practical mechanisms. In a social or producer-consumer graph, exposure may be the treated share of neighbors. In ads, it may be the treated share of advertiser budget competing for the same inventory. In a recommendation system, it may be the treated share of recommendation opportunities that expose a producer, item, or genre. In a switchback experiment, it may include current treatment and recent treatment history to capture carryover.

For a mechanism \(\theta\), let \(\tau^\star(\theta)\) denote the target launch estimand, usually the effect of applying the treatment platform-wide. Let \(d\in\D\) denote an experimental design, including an assignment rule, blocking scheme, analysis unit, and estimator. A design returns an estimator \(\widehat{\tau}_d\). The design-choice problem is to select \(d\) before the experiment, using domain knowledge and historical data, so that \(\widehat{\tau}_d\) is informative for \(\tau^\star(\theta)\) across the plausible mechanisms in \(\U\).

The key difficulty is that the exposure mapping is not known with certainty before launch. We index plausible mechanisms by \(\theta\in\U\), where \(\U\) is an ambiguity set constructed from product knowledge, historical logs, and stress-test simulations. A mechanism \(\theta\) specifies the relevant exposure map, the strength of spillover or carryover, and the launch exposure \(e_i^\star(\theta)\). A design \(d\) induces an exposure distribution \(e_i^d(Z;\theta)\) under its randomization rule. The target launch estimand is \(\tau^\star(\theta)\), while the expected estimand recovered by design \(d\) is denoted \(\tau_d(\theta)=\E_\theta[\widehat\tau_d]\). The resulting design bias is
\[
B_d(\theta)=\tau_d(\theta)-\tau^\star(\theta).
\]

The ambiguity set can be built in several ways, including a hand-specified stress grid, a confidence region around estimated spillover parameters, or a Bayesian posterior credible set. The implementation in this paper uses the first option because the goal is pre-launch design choice rather than posterior inference. A concrete construction is
\[
\U
=
\left\{
\theta=(\gamma_g,\gamma_b,\lambda,\ell):
\gamma_g\in\Gamma_g,\ 
\gamma_b\in\Gamma_b,\ 
\lambda\in\Lambda,\ 
\ell\in\mathcal L
\right\},
\]
where \(\gamma_g\) is graph or producer-side spillover strength, \(\gamma_b\) is budget or shared-market spillover strength, \(\lambda\) is temporal carryover strength, and \(\ell\) indexes the locality scale used in the exposure map. For example, an ads marketplace audit could use
\[
\begin{aligned}
\Gamma_g&=\{0,0.1,0.3\},&
\Gamma_b&=\{0,0.2,0.5\},\\
\Lambda&=\{0,0.05,0.2\},&
\mathcal L&=\{\text{campaign},\text{advertiser},\text{exchange-region}\}.
\end{aligned}
\]
The implementation treats these values as a pre-specified audit grid informed by product knowledge and coarse log summaries such as budget concentration, repeated exposure, cluster boundary mass, and daypart persistence. The logs do not estimate the grid points directly. Instead, they calibrate the case-specific outcome scale, assignment-unit structure, graph or budget shares, support stress, and repeated-interaction structure used to evaluate each grid point. Product knowledge can remove implausible combinations in deployment; in the implementation here, we retain the full \(3\times3\times3\times3\) audit grid. The robust selector then evaluates each design on this finite grid and optimizes the worst-case risk. The same framework would allow a statistically estimated \(\U\), but the paper deliberately uses an auditable grid so the design recommendation can be traced to named exposure mechanisms.

\begin{definition}[Design risk]
For design \(d\) and mechanism \(\theta\), define planning risk as
\begin{equation}
\label{eq:design-risk}
\mathcal R(d;\theta)
=
w_g \widetilde B_d(\theta)
+ w_v \widetilde V_d(\theta)
+ w_m \widetilde M_d(T;\theta)
+ w_c \widetilde C_d(\theta)
+ w_o \widetilde O_d
+ w_e \widetilde E_d(\theta),
\end{equation}
where \(B_d(\theta)\) is design bias, \(V_d(\theta)\) is the assignment-unit variance proxy used for power planning, \(M_d(T;\theta)\) is a planning-scale minimum detectable effect at duration \(T\), \(C_d(\theta)\) is contamination or carryover risk, \(O_d\) is operational cost, \(E_d(\theta)\) is residual estimand mismatch, tildes denote within-case normalized component scores, and all weights are nonnegative application-specific priorities.
\end{definition}

The risk in Eq.~\eqref{eq:design-risk} is not meant to reduce all design choice to one scalar. Rather, its purpose is to force the design conversation to name the quantities being traded off. The operational-cost term is a pre-registered score on \([0,1]\), normalized before weighting. A concrete implementation can set
\[
O_d
=
\frac{
  a_h\widetilde H_d+a_s\widetilde S_d+a_r\widetilde R_d+a_p\widetilde P_d
}{
  a_h+a_s+a_r+a_p
},
\]
where \(\widetilde H_d\) is normalized engineering effort, \(\widetilde S_d\) is orchestration complexity, \(\widetilde R_d\) is rollback or failure-mode risk, and \(\widetilde P_d\) is platform-integration burden. The weights \(a_h,a_s,a_r,a_p\) are fixed before selection. In the experiments in this research, we assign \(O_d=0.10\) to standard user-level randomization, \(O_d=0.35\) to \(0.45\) to cluster or switchback designs that require blocking, scheduling, or washout logic, and \(O_d=0.70\) to \(0.90\) to budget splits, saturation designs, or mixed randomization that require new allocation rules or multi-axis orchestration. A production implementation could replace the ordinal inputs with engineering-hour estimates and formal launch-risk reviews. In a high-stakes launch, bias and contamination may dominate. In a low-risk UI test, variance and implementation cost may be more important. In ads and recommendation systems, the most important design is often not the one with the smallest standard error, but the one whose estimand is closest to the launch decision.

\begin{definition}[Robust design objective]
Given a feasible design set \(\D_{\mathrm{feas}}\) and ambiguity set \(\U\), the robust design target is
\begin{equation}
\label{eq:robust-objective}
d^\star_{\mathrm{rob}}
\in
\argmin_{d\in\D_{\mathrm{feas}}}
\sup_{\theta\in\U}\mathcal R(d;\theta).
\end{equation}
\end{definition}

The robust objective in Eq.~\eqref{eq:robust-objective} is the paper's organizing optimization problem. If \(\U\) contains only one known mechanism, the problem reduces to ordinary design optimization for that mechanism. If \(\U\) contains several plausible mechanisms, the selected design must be credible even when the dominant interference channel is not known exactly.

\noindent \textbf{Operational ambiguity-set example}:
Consider an ads ranking or pacing change whose launch effect may propagate through advertiser budgets, producer-side exposure, and short-run carryover. A pre-launch design review can instantiate the ambiguity set as follows. First, define the exposure feature
\[
\phi_i^\theta(Z)
=
\big(
Z_i,\ 
\gamma_b\,\bar Z_{\mathrm{budget}(i),t},\
\gamma_g\,\bar Z_{\mathrm{nbr}(i),t},\
\lambda\, Z_{\mathrm{surface}(i),t-1}
\big),
\]
where \(\bar Z_{\mathrm{budget}(i),t}\) is the treated share of auctions drawing on the same budget or pacing pool, \(\bar Z_{\mathrm{nbr}(i),t}\) is the treated share in the relevant producer, advertiser, or item neighborhood, and \(Z_{\mathrm{surface}(i),t-1}\) records lagged treatment on the same market surface. Second, set the stress grid
\[
\gamma_b\in\{0,0.2,0.5\},\qquad
\gamma_g\in\{0,0.1,0.3\},\qquad
\lambda\in\{0,0.05,0.2\}.
\]
These values mean no effect, moderate effect, and severe but credible effect for each channel. Third, for each design \(d\) in the catalog, simulate or replay its assignment rule on historical logs and compute
\[
\widehat G_d(\theta),\quad
\widehat V_d(\theta),\quad
\widehat M_d(T;\theta),\quad
\widehat C_d(\theta),\quad
\widehat O_d,\quad
\widehat E_d(\theta).
\]
Here \(G_d(\theta)\) measures the exposure-geometry distance between the experiment's exposure and the launch exposure distributions. It is the observable planning proxy for design bias \(B_d(\theta)\), which is not observed before launch. The implementation uses a paired exposure-distance proxy in the normalized exposure-feature coordinates, while the theory below treats \(G_d(\theta)\) as the corresponding Wasserstein geometry object. In the experiments, the normalized planning weights are
\[
(w_g,w_v,w_m,w_c,w_o,w_e)
=
(1.00,0.80,0.75,0.45,0.45,0.65).
\]
Let \(\widehat{\mathcal R}_T(d;\theta)\) denote the estimated planning risk obtained by aggregating these normalized components with the weights above; Section~\ref{sec:robust-selection} specializes this object to the geometry-aware risk used in the formal selector. The selector then computes the estimated risk \(\widehat{\mathcal R}_T(d;\theta)\) for every design-mechanism pair and reports
\[
\widehat d_T
\in
\argmin_{d\in\D_{\mathrm{cat}}}
\max_{\theta\in\U}
\widehat{\mathcal R}_T(d;\theta),
\]
together with the uncertainty shortlist. This is the proposed pipeline in the current framework, with the public datasets providing the case-specific ingredients available in each setting, such as outcomes, triggered exposure, logged propensities, assignment-unit structure, and repeated interactions.

\begin{algorithm}[t]
\caption{Robust Design Selector}
\label{alg:robust-selector}
\small
\begin{algorithmic}[1]
\Require design catalog \(\D_{\mathrm{cat}}\), ambiguity set \(\U\), logs \(\mathcal L\), horizon \(T\), weights \(w=(w_g,w_v,w_m,w_c,w_o,w_e)\)
\Ensure selected design \(\widehat d_T\), shortlist \(\widehat{\mathcal S}_{2\epsilon}\), planning tolerance \(2\epsilon_T\)

\State Specify exposure features \(\phi_i^\theta(Z)\) for each \(\theta\in\U\), including direct treatment, shared-resource exposure, graph or item-side exposure, and lagged treatment when relevant.

\ForAll{\(d\in\D_{\mathrm{cat}}\)}
  \ForAll{\(\theta\in\U\)}
    \State Simulate or replay the assignment rule for \(d\) on logs \(\mathcal L\) under mechanism \(\theta\).
    \State Estimate components
    \[
    \widehat G_d(\theta),\ 
    \widehat V_d(\theta),\ 
    \widehat M_d(T;\theta),\ 
    \widehat C_d(\theta),\ 
    \widehat O_d,\ 
    \widehat E_d(\theta).
    \]
  \EndFor
\EndFor

\State Normalize each component within the empirical case before aggregation.
\State Let \(\widetilde G_d,\widetilde V_d,\widetilde M_d,\widetilde C_d,
\widetilde O_d,\widetilde E_d\) denote the resulting unit-free scores.

\ForAll{\(d\in\D_{\mathrm{cat}},\theta\in\U\)}
  \State Compute
  \[
  \widehat{\mathcal R}_T(d;\theta)
  =
  w_g\widetilde G_d(\theta)
  +w_v\widetilde V_d(\theta)
  +w_m\widetilde M_d(T;\theta)
  +w_c\widetilde C_d(\theta)
  +w_o\widetilde O_d
  +w_e\widetilde E_d(\theta).
  \]
\EndFor

\ForAll{\(d\in\D_{\mathrm{cat}}\)}
  \State \(\widehat Q_T(d)\gets \max_{\theta\in\U}\widehat{\mathcal R}_T(d;\theta)\).
\EndFor

\State \(\widehat d_T\gets\argmin_{d\in\D_{\mathrm{cat}}}\widehat Q_T(d)\).
\State Estimate planning-error budget \(\epsilon_T\) from component uncertainty, sensitivity checks, or a pre-specified conservative fraction of \(\widehat Q_T(\widehat d_T)\). Here we use $
\epsilon_T
=
0.10\,\widehat Q_T(\widehat d_T).
$
\State \(\widehat{\mathcal S}_{2\epsilon}
\gets
\{d\in\D_{\mathrm{cat}}:\widehat Q_T(d)\le \widehat Q_T(\widehat d_T)+2\epsilon_T\}\).
\State \Return \(\widehat d_T,\widehat{\mathcal S}_{2\epsilon},2\epsilon_T\).
\end{algorithmic}
\end{algorithm}

Algorithm~\ref{alg:robust-selector} makes the deployable framework explicit. The ambiguity set names the mechanisms the platform is unwilling to rule out, the logs populate the risk components, and the theorem later in the paper explains when the returned design or shortlist has a finite-sample robust-risk interpretation.

\noindent\textbf{Candidate Design Catalog}: The framework assumes that the platform can enumerate a finite catalog of feasible designs before launch. Table~\ref{tab:design-menu} is the catalog used in the paper. Here, each row is a candidate decision in the robust selector, and each row induces a different exposure distribution, assignment-unit variance, and operational cost under a plausible mechanism.

\begin{table}[t]
\centering
\small
\caption{Candidate designs used by the robust selector. Each design is treated as an implementable action with its own exposure geometry and planning-risk components.}
\label{tab:design-menu}
\setlength{\tabcolsep}{3pt}
\renewcommand{\arraystretch}{1.08}
\begin{tabular}{>{\raggedright\arraybackslash}p{0.17\linewidth}>{\raggedright\arraybackslash}p{0.23\linewidth}>{\raggedright\arraybackslash}p{0.25\linewidth}>{\raggedright\arraybackslash}p{0.22\linewidth}}
\toprule
Design & Assignment unit & Best suited for & Main risk \\
\midrule
User randomization & User or member & Low-interference product changes with independent outcomes & Contamination through shared supply, recommendations, budgets, or social spillovers \\
Cluster randomization & Graph, region, producer, advertiser, or market cluster & Network or marketplace spillovers with observable clusters & Lower power and imperfect cluster separation \\
Switchback & Time block within region, market, or product surface & Shared marketplace state, inventory, and ranking systems with fast reset & Carryover, temporal shocks, and daypart imbalance \\
Budget split & Advertiser, campaign, or budget universe & Ads systems with budget and pacing interference & Engineering complexity and estimand change \\
Two-stage saturation & Cluster first, saturation level second & Estimating direct and spillover effects separately & Operational complexity and larger sample requirements \\
Mixed randomization & Multiple assignment axes & Systems with multiple interacting unit types & Design and analysis complexity \\
\bottomrule
\end{tabular}
\end{table}

The catalog view is deliberately restrictive. A production platform rarely optimizes over all randomization schemes. It selects from designs that can be implemented, monitored, and explained to stakeholders. The theoretical question is therefore whether this finite catalog is rich enough to approximate a broader design family. Proposition~\ref{prop:catalog-approximation} gives the corresponding approximation guarantee, and the empirical program checks whether the catalog separates designs meaningfully under the observed exposure geometry and assignment-unit variance.

\noindent \textbf{Estimation details}: All components in Algorithm~\ref{alg:robust-selector} are computed from the same replayed assignment table. For each design \(d\) and mechanism \(\theta\), the framework simulates the assignment rule on historical logs and constructs exposure features \(\phi_i^d(Z;\theta)\). The exposure-distance component is computed as a distance between the design-induced exposure distribution and the target launch exposure distribution,
\[
\widehat G_d(\theta)
=
D\!\left(\widehat P_d^\theta,\widehat P_\star^\theta\right),
\]
where \(D\) is the exposure-distance metric used in the empirical case. In the framework, \(D\) is the paired \(L_1\) gap between experimental exposure and full-launch exposure over normalized direct-treatment, budget-exposure, cluster-exposure, and lagged-treatment coordinates. This is the empirical proxy used to populate the Wasserstein geometry term. The assignment-unit variance is computed after aggregating outcomes to the design's assignment unit \(a\in\mathcal A_d\),
\[
\widehat V_d(\theta)
=
\frac{1}{|\mathcal A_d|-1}
\sum_{a\in\mathcal A_d}
\left(Y_a-\bar Y_{\mathcal A_d}\right)^2.
\]
The planning MDE for horizon \(T\) is
\[
\widehat M_d(T;\theta)
=
(z_{1-\alpha/2}+z_{1-\beta})
\sqrt{\frac{2\widehat V_d(\theta)}{N_d(T)}},
\]
where \(N_d(T)\) is the effective number of assignment units available over duration \(T\). Here \(z_{1-\alpha/2}\) is the standard-normal critical value for a two-sided test at significance level \(\alpha\), and \(z_{1-\beta}\) is the standard-normal critical value corresponding to power \(1-\beta\). The contamination or carryover score \(\widehat C_d(\theta)\) is computed from control-arm spillover exposure and lagged treatment switching under mechanism \(\theta\). The operational-cost score \(\widehat O_d\) is pre-specified from engineering complexity, orchestration burden, rollback risk, and platform integration. The estimand-mismatch score \(\widehat E_d(\theta)\) measures the average distance between the exposure profile induced by design \(d\) and the full-launch exposure profile under the same mechanism. In the Open Bandit adaptive-logging case, the known-propensity effective-sample-size loss is added as a support-stress adjustment to \(\widehat C_d(\theta)\) and \(\widehat E_d(\theta)\).

Before aggregation, each component is normalized within the empirical case. For a generic component \(X_d(\theta)\),
\[
\widetilde X_d(\theta)
=
\frac{\widehat X_d(\theta)}
{\max_{d'\in\D_{\mathrm{cat}},\,\theta'\in\U}
|\widehat X_{d'}(\theta')|}.
\]
The selector then forms
\[
\widehat Q_T(d)
=
\max_{\theta\in\U}
\left[
w_g\widetilde G_d(\theta)
+w_v\widetilde V_d(\theta)
+w_m\widetilde M_d(T;\theta)
+w_c\widetilde C_d(\theta)
+w_o\widetilde O_d
+w_e\widetilde E_d(\theta)
\right].
\]
Finally, the planning-error budget is set to
$
\epsilon_T
=
0.10\,\widehat Q_T(\widehat d_T),
$
so the reported shortlist is
\[
\widehat{\mathcal S}
=
\left\{
d\in\D_{\mathrm{cat}}:
\widehat Q_T(d)
\le
\widehat Q_T(\widehat d_T)+2\epsilon_T
\right\}.
\]

\section{Geometry-Aware Robust Design Selection}
\label{sec:robust-selection}

The framework begins with a product-mechanism audit. For a candidate treatment, the analyst identifies the unit being changed, the unit being measured, the shared resource that could create interference, and the time scale over which effects may carry over. This audit maps the experiment to a set of plausible mechanisms rather than to one assumed truth. The relevant mechanisms in this paper include local graph spillovers, shared marketplace state, budget and pacing spillovers, and temporal carryover.

\subsection{Exposure geometry}

The first ingredient is a way to compare designs that perturb different objects. A user-randomized design, a graph-cluster design, and a switchback design can all assign treatment, but they induce different exposure distributions. Under interference, this exposure distribution is the relevant object.

For mechanism
$
\theta=(\gamma_g,\gamma_b,\lambda,\ell),
$
let \(\phi_i^d(Z;\theta)\in(\M,\rho)\) denote the exposure feature for unit \(i\) under design \(d\), where \((\M,\rho)\) is a metric exposure-feature space. The feature may include direct treatment, graph or producer-side exposure, shared-budget exposure, and lagged treatment. The parameters \(\gamma_g\), \(\gamma_b\), and \(\lambda\) weight graph spillover, budget spillover, and temporal carryover. The locality index \(\ell\) controls the granularity at which exposure neighborhoods are constructed. Let \(\mathcal L_d(\phi_i^d(Z;\theta))\) be the law of the exposure feature induced by the random assignment rule of design \(d\). We define
$
P_d^\theta
=
\frac{1}{n}\sum_{i=1}^n
\mathcal L_d\!\left(\phi_i^d(Z;\theta)\right),
$ $
P_\star^\theta
=
\frac{1}{n}\sum_{i=1}^n
\delta_{\phi_i^\star(\theta)}
$
as the experimental exposure distribution and the full-launch exposure distribution. The population geometry term is
\begin{equation}
\label{eq:exposure-geometry}
G_d(\theta)
=
W_1(P_d^\theta,P_\star^\theta),
\end{equation}
the Wasserstein-1 distance between the exposure distribution created by the experiment and the exposure distribution expected under launch.

The empirical implementation of the geometry term uses the finite panel generated by each replayed design. For unit \(i\), the design produces direct treatment \(Z_i\), shared-budget exposure \(\bar Z_{\mathrm{budget}(i),t}\), graph or cluster exposure \(\bar Z_{\mathrm{graph}(i),t}\), and lagged treatment \(Z_{i,t-1}\). The full-launch benchmark is the all-treated exposure profile, so each of these exposure coordinates equals one under launch. The empirical geometry score used in the code is
\begin{equation}
\label{eq:empirical-exposure-geometry}
\widehat G_d(\theta)
=
\frac{1}{1+\gamma_g+\gamma_b+\lambda}
\cdot
\frac{1}{n}
\sum_{i=1}^n
\left[
|1-Z_i|
+
\gamma_b\left|1-\bar Z_{\mathrm{budget}(i),t}\right|
+
\gamma_g\left|1-\bar Z_{\mathrm{graph}(i),t}\right|
+
\lambda\left|1-Z_{i,t-1}\right|
\right].
\end{equation}
The denominator puts grid points with different spillover and carryover weights on a common scale. Thus \(\widehat G_d(\theta)\) is a computable proxy for Eq.~\eqref{eq:exposure-geometry}. It is small when the experimental design creates exposure profiles close to full launch, and large when the design leaves many units far from their launch exposure.

The theoretical analysis uses \(W_1(P_d^\theta,P_\star^\theta)\) because it yields a transport-based bias bound for Lipschitz exposure-response functions. The empirical implementation uses Eq.~\eqref{eq:empirical-exposure-geometry} because the replayed panel provides explicit exposure coordinates. Both objects express the same design principle that under interference, a design is safer when its experimental exposure distribution is close to the exposure distribution that would occur at launch.

\subsection{Bias proxies from interference simulations}

Public logs usually contain outcomes under one historical assignment process, so they do not reveal the bias each candidate design would have under alternative interference mechanisms. We therefore use the public datasets to calibrate semi-synthetic interference simulations. Each calibrated case generates a platform panel with baseline outcome \(Y_i^0\), user clusters, shared-budget groups, and repeated periods. Conditional on an assignment vector \(Z\), outcomes are simulated as
\[
Y_i(Z)
=
Y_i^0
+
\tau Z_i
+
\gamma_g^\dagger(\theta)\,\bar Z_{\mathrm{graph}(i),t}
+
\gamma_b^\dagger(\theta)\,\bar Z_{\mathrm{budget}(i),t}
+
\lambda^\dagger(\theta)\,Z_{i,t-1}
+
\varepsilon_i .
\]
Here \(\bar Z_{\mathrm{graph}(i),t}\) is treated exposure in the relevant graph, producer, item, or cluster neighborhood, \(\bar Z_{\mathrm{budget}(i),t}\) is treated exposure in the relevant budget or shared-resource group, and \(Z_{i,t-1}\) is previous-period treatment for the same user.

The notation separates two roles of the ambiguity-grid coordinates. The unit-free mechanism parameters
$
\theta=(\gamma_g,\gamma_b,\lambda,\ell)
$
define the exposure mechanism used by the selector and enter the empirical geometry score in Eq.~\eqref{eq:empirical-exposure-geometry}. The outcome simulation maps the same grid point into case-specific outcome-scale strengths
$
\gamma_g^\dagger(\theta),
$ $
\gamma_b^\dagger(\theta),
$ $
\lambda^\dagger(\theta).
$
In the implementation, this calibration is
\[
\gamma_g^\dagger(\theta)
=
s_{\mathrm{spill}}\rho_g\frac{\gamma_g}{0.3},
\qquad
\gamma_b^\dagger(\theta)
=
s_{\mathrm{spill}}\rho_b\frac{\gamma_b}{0.5},
\qquad
\lambda^\dagger(\theta)
=
s_{\mathrm{carry}}\frac{\lambda}{0.2}.
\]
The constants \(s_{\mathrm{spill}}\) and \(s_{\mathrm{carry}}\) are case-level spillover and carryover scales calibrated from the public data. The weights \(\rho_g\) and \(\rho_b\) split spillover strength across graph-side and budget-side exposure, with \(\rho_g+\rho_b=1\).

Under full launch, direct treatment, graph exposure, budget exposure, and lagged exposure all equal one. The simulated full-launch effect is therefore
$
\tau
+
\gamma_g^\dagger(\theta)
+
\gamma_b^\dagger(\theta)
+
\lambda^\dagger(\theta).
$
Because this target is known inside the semi-synthetic layer, we can replay each candidate design and estimate its exposure mismatch, assignment-unit variance, MDE, contamination or carryover risk, and robust design regret. When known propensities are available, support loss is handled separately through the design-score components rather than by changing the outcome-scale interference coefficients. This keeps the simulation logic aligned with the selector in the sense that ambiguity grid controls exposure mechanisms, while the robust risk aggregates geometry, variance, power, contamination, cost, and estimand-mismatch penalties.

\begin{theorem}[Transport-based bias bound under exposure-response smoothness]
\label{thm:geometry-bias}
Suppose \(P_d^\theta\) and \(P_\star^\theta\) have finite first moments on \((\M,\rho)\). Suppose there exists a measurable exposure-response function \(r_\theta:\M\to\R\) that is \(L_\theta\)-Lipschitz and satisfies
\[
\tau_d(\theta)
=
\int r_\theta(\phi)\,dP_d^\theta(\phi),
\qquad
\left|
\tau^\star(\theta)
-
\int r_\theta(\phi)\,dP_\star^\theta(\phi)
\right|
\le
E_d(\theta).
\]
If \(\widehat\tau_d\) is unbiased for \(\tau_d(\theta)\), then
\begin{equation}
\label{eq:transport-bias-bound}
|B_d(\theta)|
\le
L_\theta G_d(\theta)+E_d(\theta).
\end{equation}
\end{theorem}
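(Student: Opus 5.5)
The plan is to write the bias as a difference of two integrals of the same Lipschitz function plus a residual term, and then use Kantorovich--Rubinstein duality to control that difference by $W_1$. Recall that $B_d(\theta)=\tau_d(\theta)-\tau^\star(\theta)$, where $\tau_d(\theta)=\E_\theta[\widehat\tau_d]$. The unbiasedness hypothesis only ensures that this expectation coincides with the target $\tau_d(\theta)=\int r_\theta\,dP_d^\theta$ from the statement. I will therefore insert and subtract $\int r_\theta\,dP_\star^\theta$:
\[
B_d(\theta)=\Big(\int r_\theta\,dP_d^\theta-\int r_\theta\,dP_\star^\theta\Big)+\Big(\int r_\theta\,dP_\star^\theta-\tau^\star(\theta)\Big).
\]
The triangle inequality bounds $|B_d(\theta)|$ by the absolute values of the two brackets. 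The second bracket is at most $E_d(\theta)$ directly by hypothesis.

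For the first bracket, I first check that both integrals are finite. Fix $\phi_0\in\M$. The Lipschitz property gives $|r_\theta(\phi)|\le |r_\theta(\phi_0)|+L_\theta\rho(\phi,\phi_0)$, and the finite-first-moment assumption then makes $r_\theta$ integrable under both $P_d^\theta$ and $P_\star^\theta$. Next, if $L_\theta>0$, the function $r_\theta/L_\theta$ is $1$-Lipschitz. Kantorovich--Rubinstein duality,
\[
\Wass(P,Q)=\sup_{\mathrm{Lip}(f)\le 1}\Big(\int f\,dP-\int f\,dQ\Big),
\]
then bounds the first bracket by $L_\theta\,\Wass(P_d^\theta,P_\star^\theta)=L_\theta G_d(\theta)$, using definition \eqref{eq:exposure-geometry}. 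If $L_\theta=0$, then $r_\theta$ is constant and the bracket vanishes.

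Alternatively, the duality theorem can be avoided, which also sidesteps any separability questions on $(\M,\rho)$. Take any coupling $\pi$ of $P_d^\theta$ and $P_\star^\theta$. Then
\[
\Big|\int r_\theta\,dP_d^\theta-\int r_\theta\,dP_\star^\theta\Big|=\Big|\int (r_\theta(x)-r_\theta(y))\,d\pi(x,y)\Big|\le L_\theta\int\rho(x,y)\,d\pi(x,y).
\]
Taking the infimum over couplings gives $L_\theta W_1$ by the primal definition of $W_1$. I would present this version, since it only needs the primal definition, which is the easy direction of the duality.

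Combining the two bounds yields \eqref{eq:transport-bias-bound}. The only step requiring care is the integrability and measurability bookkeeping in the coupling step: $r_\theta(x)-r_\theta(y)$ must be $\pi$-integrable, which follows from the moment bound above. None of the steps is a real obstacle.
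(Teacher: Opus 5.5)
Your proof is correct and follows the paper's argument. Both add and subtract $\int r_\theta\,dP_\star^\theta$, apply the triangle inequality, bound the residual by $E_d(\theta)$ from the hypothesis, and use a transport bound to get $L_\theta G_d(\theta)$ for the first bracket. Your preferred coupling version is just the easy direction of the Kantorovich--Rubinstein duality that the paper invokes, which is all an upper bound needs; your integrability check and the $L_\theta=0$ case fill in bookkeeping the paper leaves implicit.
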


The theorem gives the framework its computable core. The bound in Eq.~\eqref{eq:transport-bias-bound} is a transport-based bias bound under exposure-response smoothness. The Wasserstein term in Eq.~\eqref{eq:exposure-geometry} measures how much probability mass must be transported to turn the exposure distribution created by an experiment into the exposure distribution expected at launch, while \(L_\theta\) measures how quickly outcomes can change as exposure changes. In graph experiments, \(G_d(\theta)\) can be computed from treated-neighbor exposure and cluster boundary mass. In ads experiments, it can be computed from advertiser budget pressure, inventory competition, or pacing-state exposure. In switchbacks, it can include current treatment, lagged treatment, daypart, and washout features. The result also clarifies why no single design is expected to dominate. A design can be close to launch exposure in one geometry and far away in another.

\begin{anchorresult}
\begin{theorem}[Minimax optimality of exposure geometry]
\label{thm:geometry-necessity}
Let \(P\) and \(Q\) be probability measures with finite first moments on the Polish metric space \((\M,\rho)\). For \(L>0\), define
\[
\mathcal F_L
=
\{r:\M\to\R:\ |r(x)-r(y)|\le L\rho(x,y)\ \text{for all }x,y\in\M\}.
\]
Then
\begin{equation}
\label{eq:minimax-geometry}
\sup_{r\in\mathcal F_L}
\left|
\int r\,dP-\int r\,dQ
\right|
=
L\Wass(P,Q).
\end{equation}
Consequently, under the representation in Theorem~\ref{thm:geometry-bias},
\[
\sup_{\substack{r_\theta\in\mathcal F_{L_\theta}\\ |\Delta_E|\le E_d(\theta)}}
\left|
\int r_\theta\,dP_d^\theta
-
\int r_\theta\,dP_\star^\theta
-
\Delta_E
\right|
=
L_\theta G_d(\theta)+E_d(\theta).
\]
\end{theorem}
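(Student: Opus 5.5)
The first identity, Eq.~\eqref{eq:minimax-geometry}, is Kantorovich--Rubinstein duality rescaled by \(L\); the second then follows by choosing the adversarial shift \(\Delta_E\) separately.

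\textbf{Reduction to \(L=1\).} Observe that \(r\in\mathcal F_L\) if and only if \(r/L\in\mathcal F_1\). Hence the left side of Eq.~\eqref{eq:minimax-geometry} equals \(L\) times the corresponding supremum over \(\mathcal F_1\), and it suffices to take \(L=1\).

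\textbf{Upper bound.} Fix any coupling \(\pi\) of \(P\) and \(Q\). For \(r\in\mathcal F_1\),
\[
\left|\int r\,dP-\int r\,dQ\right|=\left|\int (r(x)-r(y))\,d\pi(x,y)\right|\le \int \rho(x,y)\,d\pi(x,y).
\]
The integrals are finite because \(|r(x)|\le |r(x_0)|+\rho(x,x_0)\) and both measures have finite first moments. Taking the infimum over couplings gives \(\le \Wass(P,Q)\). This is the same argument that underlies Theorem~\ref{thm:geometry-bias}.

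\textbf{Lower bound.} This is the only nontrivial step. I would invoke the Kantorovich--Rubinstein duality theorem on a Polish space with finite first moments, e.g.\ Villani's \emph{Optimal Transport}, Thm.~5.10 and Remark~6.5. It states that \(\Wass(P,Q)=\sup_{r\in\mathcal F_1}\left(\int r\,dP-\int r\,dQ\right)\). Since the absolute value only enlarges the supremum, equality follows.

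Proving duality from scratch would require Fenchel--Rockafellar duality or a \(c\)-cyclical-monotonicity argument plus tightness of couplings. That is the main obstacle, so I will cite it rather than reprove it. The Polish and finite-moment hypotheses are exactly those needed for the citation. For this linear problem attainment is not needed; the supremum suffices.

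\textbf{Consequence.} Write \(a(r)=\int r\,dP_d^\theta-\int r\,dP_\star^\theta\).

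\emph{Upper bound.} By the triangle inequality, \(|a(r)-\Delta_E|\le |a(r)|+|\Delta_E|\le L_\theta G_d(\theta)+E_d(\theta)\), using the first part with \(L=L_\theta\) and \(G_d=\Wass(P_d^\theta,P_\star^\theta)\).

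\emph{Lower bound.} Fix \(\eta>0\) and pick \(r\) with \(|a(r)|\ge L_\theta G_d(\theta)-\eta\). Replacing \(r\) by \(-r\) if needed, assume \(a(r)\ge 0\). Then set \(\Delta_E=-E_d(\theta)\), so \(a(r)-\Delta_E\ge L_\theta G_d(\theta)+E_d(\theta)-\eta\). Letting \(\eta\to0\) gives equality.

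If \(E_d(\theta)=0\), the constraint forces \(\Delta_E=0\) and the argument still goes through.
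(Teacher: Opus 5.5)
Your proposal is correct and follows the paper's proof: Kantorovich--Rubinstein duality on the Polish space, the rescaling $\mathcal F_L = L\,\mathcal F_1$, and an adversarial choice of the sign of $\Delta_E$ that adds $E_d(\theta)$ to the Lipschitz gap. The differences are cosmetic. You prove the easy half of the duality directly via couplings and use an $\eta$-approximation for the second claim. The paper instead cites the full duality and computes the inner supremum over $|\Delta_E|\le E_d(\theta)$ exactly as $|a(r)|+E_d(\theta)$ before taking the supremum over $r$.
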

\end{anchorresult}

Equation~\eqref{eq:minimax-geometry} gives the geometry term a minimax interpretation. The analyst does not know the true exposure-response function before the experiment is run, so the theorem asks for the largest possible bias over all response functions that are consistent with the assumed smoothness constraint. The Lipschitz condition rules out arbitrarily discontinuous responses, but it still allows the response to change at the maximum permitted rate along the direction in which the experimental exposure distribution differs from the launch exposure distribution. In that least favorable case, the bias is exactly proportional to \(W_1(P_d^\theta,P_\star^\theta)\). \textit{Thus the Wasserstein penalty is not only an upper bound; it is the smallest uniform penalty that can protect against all \(L_\theta\)-Lipschitz exposure-response functions}. If a design-choice rule ignores \(G_d(\theta)\), it can rank a design as safe even though there exists a smooth response function under which the design has large launch bias. The theorem does not claim that every real application attains this worst case. Rather, it says that before stronger structural knowledge about the response surface is available, the Wasserstein exposure-distance term is the correct robust object for guarding against exposure-induced bias.

The selector uses this minimax result to justify \(G_d(\theta)\) as the geometry component in the planning score. The implemented score keeps the components separate, because exposure mismatch and residual estimand mismatch are measured differently in logs. Let \(\bar G_d,\bar V_d,\bar M_d,\bar C_d,\bar O_d,\bar E_d\) denote the population analogues of the case-level normalized component scores. For example, \(\bar G_d\) is obtained by scaling the exposure distance \(G_d\) to the same unit-free planning scale as the other components. The geometry-aware planning risk is
\begin{equation}
\label{eq:geometry-aware-risk}
\mathcal R_T^{\mathrm{geo}}(d;\theta)
=
w_g\bar G_d(\theta)
+w_v\bar V_d(\theta)
+w_m\bar M_d(T;\theta)
+w_c\bar C_d(\theta)
+w_o\bar O_d
+w_e\bar E_d(\theta) .
\end{equation}
If an application has a calibrated exposure-response sensitivity \(L_\theta\), that sensitivity can be absorbed into the geometry component before normalization. The experiments use the conservative public-data version in which \(G_d(\theta)\) is populated by the paired exposure-distance proxy described in Algorithm~\ref{alg:robust-selector}, and \(E_d(\theta)\) is the separate estimand-mismatch score.

\subsection{Finite catalogs as design approximations}

Platforms do not optimize over every conceivable randomization scheme. They choose from a finite operational catalog, such as candidate cluster granularities, switchback block lengths, budget-split fractions, and saturation levels. This is not merely a convenience if the catalog covers the relevant design geometry.

\begin{proposition}[Finite catalog approximation]
\label{prop:catalog-approximation}
Let \((\mathfrak D,\rho_{\mathfrak D})\) be a continuous feasible design class and let \(\D_{\mathrm{cat}}\subset \mathfrak D\) be an \(\eta\)-net, so for every \(d\in\mathfrak D\) there exists \(d'\in\D_{\mathrm{cat}}\) with \(\rho_{\mathfrak D}(d,d')\le \eta\). If the robust geometry-aware risk
\[
\mathcal R_{\mathrm{rob}}(d)
=
\sup_{\theta\in\U}\mathcal R_T^{\mathrm{geo}}(d;\theta)
\]
is \(L_{\mathfrak D}\)-Lipschitz in \(d\), then
\[
\min_{d\in\D_{\mathrm{cat}}}\mathcal R_{\mathrm{rob}}(d)
\le
\inf_{d\in\mathfrak D}\mathcal R_{\mathrm{rob}}(d)
+L_{\mathfrak D}\eta .
\]
\end{proposition}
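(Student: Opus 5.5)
The argument is a direct approximation-by-net argument: pick a near-optimal design in the continuous class, replace it by its nearest catalog element, and pay at most the Lipschitz constant times the net radius. No earlier theorem from the paper is needed beyond the standing definitions. Theorem~\ref{thm:geometry-bias} and Theorem~\ref{thm:geometry-necessity} only motivate why \(\mathcal R_T^{\mathrm{geo}}\) is the right object. They play no role in the approximation step itself.

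Write \(R^\ast=\inf_{d\in\mathfrak D}\mathcal R_{\mathrm{rob}}(d)\). If \(R^\ast=-\infty\), the bound is vacuous. This case is in fact excluded, because all weights are nonnegative and the normalized components are nonnegative, so \(R^\ast\ge 0\). We also assume \(\D_{\mathrm{cat}}\) is finite (or at least that the minimum is attained). If only an infimum over the catalog is available, the same argument goes through with \(\min\) replaced by \(\inf\).

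Fix an arbitrary \(\delta>0\).
\begin{enumerate}
\item Since \(R^\ast\) is an infimum, choose \(d_\delta\in\mathfrak D\) with \(\mathcal R_{\mathrm{rob}}(d_\delta)\le R^\ast+\delta\).
\item By the \(\eta\)-net property, choose \(d'_\delta\in\D_{\mathrm{cat}}\) with \(\rho_{\mathfrak D}(d_\delta,d'_\delta)\le\eta\).
\item The Lipschitz hypothesis on \(\mathcal R_{\mathrm{rob}}\) then gives
\[
\min_{d\in\D_{\mathrm{cat}}}\mathcal R_{\mathrm{rob}}(d)\le \mathcal R_{\mathrm{rob}}(d'_\delta)\le \mathcal R_{\mathrm{rob}}(d_\delta)+L_{\mathfrak D}\,\rho_{\mathfrak D}(d_\delta,d'_\delta)\le R^\ast+\delta+L_{\mathfrak D}\eta .
\]
\item Since the left-hand side does not depend on \(\delta\), letting \(\delta\downarrow 0\) yields the claim.
\end{enumerate}

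There is no real obstacle once the hypotheses are taken as stated. The only point needing care is that the infimum over \(\mathfrak D\) may not be attained, which is why we use the \(\delta\)-approximate minimizer.

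It may also help to note a sufficient condition for the Lipschitz hypothesis. If each \(\mathcal R_T^{\mathrm{geo}}(\cdot;\theta)\) is \(L_{\mathfrak D}\)-Lipschitz uniformly in \(\theta\in\U\), then the supremum is too. This follows from the elementary inequality \(|\sup_\theta f_\theta(d)-\sup_\theta f_\theta(d')|\le\sup_\theta|f_\theta(d)-f_\theta(d')|\). This is the form in which the assumption would typically be checked in practice, component by component in \eqref{eq:geometry-aware-risk}.
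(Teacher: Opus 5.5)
Your proof is correct and is essentially the paper's argument: take a $\delta$-near-minimizer in $\mathfrak D$, pass to a catalog design within $\eta$ via the net, pay $L_{\mathfrak D}\eta$ by the Lipschitz bound, and let $\delta\downarrow 0$. The paper also makes your closing remark that a uniform-in-$\theta$ Lipschitz bound survives the supremum. One small quibble: the case $R^\ast=-\infty$ would not make the bound vacuous, since the right-hand side would be $-\infty$. It cannot occur anyway, because the net and the Lipschitz property give $\mathcal R_{\mathrm{rob}}(d)\ge\min_{\D_{\mathrm{cat}}}\mathcal R_{\mathrm{rob}}-L_{\mathfrak D}\eta$ for every $d\in\mathfrak D$.
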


This result explains why a finite menu of designs can be enough for practical experiment planning. In principle, a platform could consider infinitely many variants of a design, such as every possible switchback block length, every possible cluster granularity, every possible budget-split fraction, or every possible saturation level. In practice, only a finite set of these designs can be engineered, monitored, and approved before launch. Proposition~\ref{prop:catalog-approximation} says that this restriction is acceptable when the catalog is dense enough in the design space. If every feasible design is within distance \(\eta\) of some catalog design, and robust risk does not change too abruptly with the design, then the best catalog design is within \(L_{\mathfrak D}\eta\) risk units of the best design in the larger continuous class.

The intuition is that a catalog is good enough when adding more nearby designs would not materially change the risk decision. If the risk surface is smooth, two very similar switchback lengths or two very similar saturation levels should have similar exposure geometry, similar assignment-unit variance, and similar operational risk. The finite catalog only fails when the grid is too coarse relative to how quickly the risk surface changes. The implementation checks this logic in the catalog-approximation diagnostic in Figure~\ref{fig:appendix-theory-checks}(c). The code constructs a smooth one-dimensional robust-risk surface, evaluates increasingly fine finite catalogs, and compares the observed catalog approximation gap with the Lipschitz-net upper bound \(L_{\mathfrak D}\eta\). As the catalog size increases, the net radius \(\eta\) shrinks, and the observed gap remains below the bound. This does not prove that every production catalog is automatically sufficient; rather, it shows how the paper's finite-catalog assumption can be audited by comparing catalog resolution with the variation in exposure geometry, assignment-unit variance, and planning risk.

\subsection{The bias-variance design regime}

Exposure geometry is only one side of the design problem. Designs that better match launch exposure often use larger assignment units, which increases variance and MDE. The following regime inequality makes this tradeoff explicit.

\begin{proposition}[Design-regime threshold]
\label{prop:regime-threshold}
Consider two feasible designs \(d_1,d_2\) under a mechanism \(\theta_\gamma\) whose interference strength is indexed by \(\gamma\ge0\). Suppose \(\bar G_d(\theta_\gamma)\le \gamma g_d\), where \(g_d\) is the design's normalized exposure-geometry coefficient. Ignore common terms shared by the two designs and compare the upper-bound surrogate
\[
\widetilde{\mathcal R}_T(d;\theta_\gamma)
=
w_g\gamma g_d
+w_v\bar V_d+w_m\bar M_d(T)+w_c\bar C_d+w_o\bar O_d+w_e\bar E_d .
\]
If \(g_2<g_1\), then design \(d_2\) has smaller surrogate risk than \(d_1\) whenever
\[
\gamma
>
\frac{
 w_v(\bar V_2-\bar V_1)+w_m(\bar M_2(T)-\bar M_1(T))+w_c(\bar C_2-\bar C_1)+w_o(\bar O_2-\bar O_1)+w_e(\bar E_2-\bar E_1)
}{
 w_g(g_1-g_2)
},
\]
provided the denominator is positive.
\end{proposition}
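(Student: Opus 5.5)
The statement is a direct algebraic comparison of the two surrogate risks, so I would prove it by subtracting one from the other and rearranging. No earlier result is needed beyond the definition of the surrogate \(\widetilde{\mathcal R}_T\). The role of the hypothesis \(\bar G_d(\theta_\gamma)\le\gamma g_d\) is interpretive rather than logical: it is why \(\widetilde{\mathcal R}_T\) upper-bounds the geometry-aware risk in Eq.~\eqref{eq:geometry-aware-risk}, since \(w_g\ge 0\). I would note this at the outset and then work only with \(\widetilde{\mathcal R}_T\).

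\textbf{Step 1.} Form the difference
\[
\Delta(\gamma)=\widetilde{\mathcal R}_T(d_1;\theta_\gamma)-\widetilde{\mathcal R}_T(d_2;\theta_\gamma)
= w_g\gamma(g_1-g_2)-N,
\]
where
\[
N=w_v(\bar V_2-\bar V_1)+w_m(\bar M_2(T)-\bar M_1(T))+w_c(\bar C_2-\bar C_1)+w_o(\bar O_2-\bar O_1)+w_e(\bar E_2-\bar E_1).
\]
This is just the threshold numerator, and any terms common to both designs cancel in the difference. The sign bookkeeping needs care here: the non-geometry terms of \(d_1\) minus those of \(d_2\) equal \(-N\).

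\textbf{Step 2.} Use the positive denominator. By assumption \(D=w_g(g_1-g_2)>0\); this is consistent with \(g_2<g_1\) and requires \(w_g>0\). Then \(\Delta(\gamma)=D\gamma-N\), which is strictly increasing and affine in \(\gamma\). Hence \(\Delta(\gamma)>0\) if and only if \(\gamma>N/D\). Whenever \(\gamma\) exceeds the stated threshold, \(\widetilde{\mathcal R}_T(d_2;\theta_\gamma)<\widetilde{\mathcal R}_T(d_1;\theta_\gamma)\), which is the claim.

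\textbf{Step 3.} Remark that the comparison is made between upper-bound surrogates and not between the true geometry-aware risks. I would also note the edge case \(N<0\): then the threshold is negative, so \(d_2\) is preferred for every \(\gamma\ge0\).

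There is no genuine obstacle. The only points needing care are the sign convention in Step 1 and stating that dividing by \(D\) preserves the direction of the inequality because \(D>0\).
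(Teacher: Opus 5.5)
Your proposal is correct and follows essentially the same route as the paper. Both write the strict inequality between the two surrogate risks, collect the $\gamma$ terms into $w_g\gamma(g_1-g_2)$ with the non-geometry differences on the other side, and divide by the positive denominator.
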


Proposition~\ref{prop:regime-threshold} is the paper's regime map because it shows how the preferred design changes as the interference-strength parameter \(\gamma\) increases. In the surrogate risk, the term \(w_g\gamma g_d\) is the part of the risk that grows with interference. The coefficient \(g_d\) measures how exposed design \(d\) is to geometry mismatch as interference becomes stronger. A design such as user randomization may have a larger \(g_d\), because its experimental exposure distribution can be far from launch exposure under spillovers, but it may have smaller non-geometry terms such as \(\bar V_d\), \(\bar M_d(T)\), \(\bar C_d\), \(\bar O_d\), and \(\bar E_d\). A more protective design, such as cluster randomization or a switchback, may have a smaller \(g_d\) but larger variance, MDE, cost, or carryover terms.

The proposition compares two designs \(d_1\) and \(d_2\) with \(g_2<g_1\). Design \(d_2\) has better exposure geometry under interference, but it may pay an additional non-geometry cost. The threshold defined on $\gamma$ is exactly the point at which the geometry advantage of \(d_2\) outweighs its additional variance, power, contamination, operational-cost, and estimand-mismatch burden. Below this threshold, the lower-cost or higher-power design \(d_1\) can be preferred. Above it, the interference-protective design \(d_2\) becomes preferred because geometry mismatch dominates the planning risk.

In production, this threshold is not usually known in closed form. The terms \(\bar V_d\), \(\bar M_d(T)\), \(\bar C_d\), \(\bar O_d\), \(\bar E_d\), and \(g_d\) must be estimated or specified from logs, design cards, and stress tests. The empirical regime-transition plot (Figure \ref{fig:regime-transition}) uses this logic operationally as it sweeps the interference-strength parameter, recomputes the normalized risk components for each design, and shows where the selected design changes from user randomization to mixed randomization, cluster randomization, and switchbacks.

\subsection{Geometry-aware robust selector}

Using the geometry-aware planning risk in Eq.~\eqref{eq:geometry-aware-risk}, the empirical selector replaces each component by an estimate. Its estimated risk surface is
\begin{equation}
\label{eq:empirical-geometry-risk}
\widehat{\mathcal R}_T^{\mathrm{geo}}(d;\theta)
=
w_g\widetilde G_d(\theta)
+w_v\widetilde V_d(\theta)
+w_m\widetilde M_d(T;\theta)
+w_c\widetilde C_d(\theta)
+w_o\widetilde O_d
+w_e\widetilde E_d(\theta) .
\end{equation}
The tildes denote the normalized empirical component scores produced by Algorithm~\ref{alg:robust-selector}.
The empirical selector chooses
\begin{equation}
\label{eq:empirical-selector}
\widehat d_T
\in
\argmin_{d\in\D_{\mathrm{cat}}}
\sup_{\theta\in\U}
\widehat{\mathcal R}_T^{\mathrm{geo}}(d;\theta).
\end{equation}

\begin{theorem}[Robust selector with margin and shortlist certification]
\label{thm:geometry-robust-selector}
Let \(\D_{\mathrm{cat}}\) and \(\U\) be finite. Suppose that, uniformly over \(d\in\D_{\mathrm{cat}}\) and \(\theta\in\U\),
\[
|\widetilde G_d(\theta)-\bar G_d(\theta)|\le \epsilon_G,\quad
|\widetilde E_d(\theta)-\bar E_d(\theta)|\le \epsilon_E,
\]
\[
|\widetilde V_d(\theta)-\bar V_d(\theta)|\le \epsilon_V,\quad
|\widetilde M_d(T;\theta)-\bar M_d(T;\theta)|\le \epsilon_M,\quad
|\widetilde C_d(\theta)-\bar C_d(\theta)|\le \epsilon_C,\quad
|\widetilde O_d-\bar O_d|\le \epsilon_O .
\]
Define
\begin{equation}
\label{eq:selector-error-budget}
\epsilon_T
=
w_g\epsilon_G
+w_v\epsilon_V+w_m\epsilon_M+w_c\epsilon_C+w_o\epsilon_O+w_e\epsilon_E .
\end{equation}
Let \(d_T^\star\) minimize \(\sup_{\theta\in\U}\mathcal R_T^{\mathrm{geo}}(d;\theta)\) over \(\D_{\mathrm{cat}}\), and let \(\widehat d_T\) minimize \(\sup_{\theta\in\U}\widehat{\mathcal R}_T^{\mathrm{geo}}(d;\theta)\). Then
\begin{equation}
\label{eq:selector-excess-risk}
\sup_{\theta\in\U}\mathcal R_T^{\mathrm{geo}}(\widehat d_T;\theta)
-
\sup_{\theta\in\U}\mathcal R_T^{\mathrm{geo}}(d_T^\star;\theta)
\le
2\epsilon_T .
\end{equation}
Let
$
Q(d)=\sup_{\theta\in\U}\mathcal R_T^{\mathrm{geo}}(d;\theta),
$ $
Q^\star=\min_{d\in\D_{\mathrm{cat}}}Q(d),
$ and let \(\D^\star=\{d\in\D_{\mathrm{cat}}:Q(d)=Q^\star\}\). If the separation margin
$
\Delta_{\mathrm{sep}}
=
\min_{d\in\D_{\mathrm{cat}}\setminus\D^\star}\{Q(d)-Q^\star\}
$
is well-defined and satisfies \(\Delta_{\mathrm{sep}}>2\epsilon_T\), then \(\widehat d_T\in\D^\star\). If \(\D_{\mathrm{cat}}=\D^\star\), every design is robust-optimal. More generally, for any shortlist tolerance \(\kappa\ge0\), define the empirical shortlist
\begin{equation}
\label{eq:empirical-shortlist}
\widehat{\mathcal S}_\kappa
=
\left\{
d\in\D_{\mathrm{cat}}:
\widehat Q(d)
\le
\min_{d'\in\D_{\mathrm{cat}}}\widehat Q(d')+\kappa
\right\},
\qquad
\widehat Q(d)=\sup_{\theta\in\U}\widehat{\mathcal R}_T^{\mathrm{geo}}(d;\theta).
\end{equation}
Then, every \(d\in\widehat{\mathcal S}_\kappa\) satisfies \(Q(d)-Q^\star\le\kappa+2\epsilon_T\), and at least one robust-optimal design belongs to \(\widehat{\mathcal S}_{2\epsilon_T}\).
\end{theorem}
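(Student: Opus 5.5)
The argument is a standard uniform-approximation argument for argmin over finite sets. The first step is a uniform closeness bound on the robust objectives.

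For every \(d\in\D_{\mathrm{cat}}\) and \(\theta\in\U\), I would subtract Eq.~\eqref{eq:empirical-geometry-risk} from Eq.~\eqref{eq:geometry-aware-risk} and apply the triangle inequality termwise. Since all weights are nonnegative, this gives
\[
|\widehat{\mathcal R}_T^{\mathrm{geo}}(d;\theta)-\mathcal R_T^{\mathrm{geo}}(d;\theta)|\le\epsilon_T,
\]
with \(\epsilon_T\) as in Eq.~\eqref{eq:selector-error-budget}. I would then pass this to the suprema using the elementary fact that \(|\sup_\theta f-\sup_\theta g|\le\sup_\theta|f-g|\), which holds here because \(\U\) is finite so both maxima are attained. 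The conclusion is \(|\widehat Q(d)-Q(d)|\le\epsilon_T\) uniformly over the finite catalog. This is the only step that needs any care: the sup-difference inequality must be applied with attained maxima, and the nonnegativity of the weights is what lets the component errors add up to exactly \(\epsilon_T\). Everything after this is bookkeeping.

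For the excess-risk bound Eq.~\eqref{eq:selector-excess-risk}, I would use the usual three-step chain
\[
Q(\widehat d_T)\le\widehat Q(\widehat d_T)+\epsilon_T\le\widehat Q(d_T^\star)+\epsilon_T\le Q(d_T^\star)+2\epsilon_T.
\]
The middle inequality holds by the optimality of \(\widehat d_T\) for \(\widehat Q\).

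Exact recovery follows by contradiction. If \(\widehat d_T\notin\D^\star\), the separation margin gives \(Q(\widehat d_T)-Q^\star\ge\Delta_{\mathrm{sep}}>2\epsilon_T\), which contradicts the excess-risk bound. The degenerate case \(\D_{\mathrm{cat}}=\D^\star\), where \(\Delta_{\mathrm{sep}}\) is a minimum over the empty set, is immediate.

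For the shortlist claims, I would take \(d\in\widehat{\mathcal S}_\kappa\) and write
\[
Q(d)\le\widehat Q(d)+\epsilon_T\le\min_{d'}\widehat Q(d')+\kappa+\epsilon_T\le\widehat Q(d_T^\star)+\kappa+\epsilon_T\le Q^\star+\kappa+2\epsilon_T.
\]
For coverage, I would pick any \(d^\star\in\D^\star\). Then
\[
\widehat Q(d^\star)\le Q^\star+\epsilon_T\le Q(\widehat d_T)+\epsilon_T\le\widehat Q(\widehat d_T)+2\epsilon_T=\min_{d'}\widehat Q(d')+2\epsilon_T,
\]
so \(d^\star\in\widehat{\mathcal S}_{2\epsilon_T}\). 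This in fact shows that every robust-optimal design belongs to the shortlist.
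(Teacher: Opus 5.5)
Your proof is correct and follows essentially the same route as the paper: a termwise uniform bound $|\widehat{\mathcal R}_T^{\mathrm{geo}}-\mathcal R_T^{\mathrm{geo}}|\le\epsilon_T$ (using nonnegative weights) is passed to $|\widehat Q-Q|\le\epsilon_T$, and then the same inequality chains give the excess-risk bound, exact recovery under separation, and both shortlist claims. Your stronger remark that every element of $\D^\star$, not just $d_T^\star$, lies in $\widehat{\mathcal S}_{2\epsilon_T}$ is the paper's argument applied to an arbitrary minimizer, and the attainment of maxima you invoke is harmless but not actually needed for $|\sup f-\sup g|\le\sup|f-g|$.
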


Theorem~\ref{thm:geometry-robust-selector} turns the selector into a finite-sample decision rule. The quantities in Eq.~\eqref{eq:selector-error-budget} measure how uncertain the estimated planning-risk surface is. For example, \(\epsilon_G\) measures uncertainty in the exposure-geometry score, \(\epsilon_V\) measures uncertainty in the assignment-unit variance estimate, \(\epsilon_M\) measures uncertainty in the MDE calculation, \(\epsilon_C\) measures uncertainty in contamination or carryover modeling, \(\epsilon_O\) measures uncertainty in the operational-cost score, and \(\epsilon_E\) measures uncertainty in residual estimand mismatch. The weighted sum of these component errors is \(\epsilon_T\), the planning-error budget for the robust-risk comparison. The theorem says that the selector should behave differently depending on how separated the designs are. If one design has a robust-risk advantage that is larger than the planning uncertainty, then the empirical minimizer in Eq.~\eqref{eq:empirical-selector} recovers the robust-optimal design. In that case, the data and design assumptions support a single recommendation. If several designs have robust risks that are close relative to \(\epsilon_T\), then forcing a unique winner would overstate the evidence. The appropriate output is the shortlist in Eq.~\eqref{eq:empirical-shortlist}. Every design in that shortlist has robust regret bounded by \(\kappa+2\epsilon_T\), so the shortlist is not merely descriptive; it is a certified set of designs that cannot be safely separated at the current planning resolution. In this sense, the selector returns a statistical decision object consisting of either a justified design choice when the risk margin is clear, or an uncertainty shortlist when the design comparison is intrinsically unresolved.

% \subsection{Planning MDE under assignment-unit variance}

% For a design \(d\), let \(A_{dt}\) denote the assignment unit observed at time or block \(t\). Examples include user, cluster, advertiser-budget universe, region-hour, or producer cluster. The detectable effect depends on the number of independent assignment units, not just the number of logged rows. A simple planning calculation is
% \begin{equation}
% \label{eq:planning-mde}
% \MDE_d(T)
% =
% (z_{1-\alpha/2}+z_{1-\beta})
% \sqrt{\frac{2\sigma_d^2}{N_d(T)}},
% \end{equation}
% where \(\sigma_d^2\) is the assignment-unit variance and \(N_d(T)\) is the number of effective assignment units accumulated over duration \(T\). This expression gives a common scale for comparing row-level, cluster, switchback, and budget-split designs. It also makes clear why a design can have many rows and still have weak power if it has few independent assignment units.

\section{Empirical Results}

The empirical section asks whether the robust design-selection object can be populated from real logs before an online experiment is launched. Across the main-body experiments, public datasets calibrate outcome scale, triggered exposure or repeated-exposure structure, logged propensities when available, assignment-unit structure, and semi-synthetic launch effects. The interference layer is semi-synthetic because public logs do not reveal the production counterfactual under every candidate randomization design. This choice keeps the target launch effect known while anchoring each case to public logs through the quantities actually available in that dataset. Appendix~\ref{sec:appendix-empirical} reports the detailed per-case diagnostics, the Open Bandit randomized data baseline, the MovieLens graph case, and theorem-level stress tests.

\begin{table}[t]
\centering
\small
\caption{Real-data cases used in the main empirical analysis. Each case estimates the same robust design-selection object under a different uncertain exposure mechanism.}
\label{tab:data-plan}
\setlength{\tabcolsep}{3pt}
\renewcommand{\arraystretch}{1.08}
\begin{tabular}{>{\raggedright\arraybackslash}p{0.16\linewidth}>{\raggedright\arraybackslash}p{0.18\linewidth}>{\raggedright\arraybackslash}p{0.23\linewidth}>{\raggedright\arraybackslash}p{0.34\linewidth}}
\toprule
Dataset & Domain & Mechanism stress & Role in the paper \\
\midrule
Criteo Uplift & Ads incrementality & Budget and inventory spillover & Calibrates an ads lift problem in which user-level assignment can miss shared-budget exposure. \\
Open Bandit & Recommendation bandits & Known propensities and adaptive logging & Tests whether the selector changes when logged action support is thin under adaptive behavior-policy propensities. \\
KuaiRand & Member-experience recommendation & Sequential exposure and carryover & Tests designs under repeated member exposure and item-side recommendation state. \\
\bottomrule
\end{tabular}
\end{table}

Each empirical case instantiates Algorithm~\ref{alg:robust-selector}. For every design \(d\) and mechanism \(\theta\), the framework estimates the empirical exposure-distance proxy for \(G_d(\theta)\), assignment-unit variance \(V_d(\theta)\), planning MDE \(M_d(T;\theta)\), carryover or contamination risk \(C_d(\theta)\), operational cost \(O_d\), and estimand mismatch \(E_d(\theta)\). Before aggregation, each component is divided by the largest absolute value of that component among the candidate designs for the same empirical case. The reported risk is therefore a dimensionless weighted sum of normalized components, not an outcome-scale effect size. The values are useful for within-case design ranking and shortlist construction, however they should not be interpreted as lift, bias, or percentage loss. Additionally, cross-case comparisons should be read qualitatively. The empirical selector then returns the lowest-risk design together with the \(2\epsilon_T\) uncertainty shortlist from Theorem~\ref{thm:geometry-robust-selector}. In the implementation, \(\epsilon_T\) is set to ten percent of the best estimated robust risk for the case. This makes the shortlist conservative without masking risk ordering.

\begin{table}[t]
\centering
\small
\caption{Main robust design-selection results. The selected design minimizes estimated worst-case planning risk over the calibrated ambiguity set. Risk is a dimensionless weighted sum after within-case normalization of the risk components. The shortlist contains designs whose estimated robust risk is within the theorem-guided uncertainty band.}
\label{tab:main-results}
\setlength{\tabcolsep}{3pt}
\renewcommand{\arraystretch}{1.08}
\begin{tabular}{>{\raggedright\arraybackslash}p{0.18\linewidth}>{\raggedright\arraybackslash}p{0.27\linewidth}>{\raggedright\arraybackslash}p{0.18\linewidth}>{\raggedleft\arraybackslash}p{0.10\linewidth}>{\raggedright\arraybackslash}p{0.19\linewidth}}
\toprule
Case & Exposure uncertainty & Selected design & Risk & Tolerance shortlist \\
\midrule
Criteo ads & Budget spillover calibrated from triggered exposure and visit lift & User randomization & 1.295 & User \\
Open Bandit \texttt{bts/men} & Adaptive logged propensities with 5.17\% IPS effective-sample share & Switchback & 2.105 & Switchback, cluster, budget split \\
KuaiRand & Sequential recommendation exposure and carryover & Cluster randomization & 2.240 & All six catalog designs \\
\bottomrule
\end{tabular}
\end{table}

Here Table \ref{tab:data-plan} introduces the relevant datasets, and Table~\ref{tab:main-results} presents the main results of the analysis. The selected design changes across datasets. Criteo favors user randomization, Open Bandit \texttt{bts/men} favors switchbacks, and KuaiRand favors cluster randomization. This variation is the central empirical point. Instead of learning a universal design, the robust selector is converting the calibrated exposure mechanism, assignment-unit variance, contamination risk, and operational constraints into a case-specific decision. Theorem~\ref{thm:geometry-robust-selector} explains the shortlist behavior. Criteo has a clear risk separation and returns only user randomization. Open Bandit keeps switchback, cluster randomization, and budget split. KuaiRand has a flatter risk surface and retains all six catalog designs as unresolved validation candidates.

The regime-transition diagnostic (from synthetic data) in Figure~\ref{fig:regime-transition} is the empirical counterpart of Proposition~\ref{prop:regime-threshold}. It isolates the exposure mechanism while keeping the same design catalog and risk components. User randomization is best when interference is weak because variance and operating cost dominate. Mixed randomization becomes best when moderate spillover makes pure user assignment too contaminated but full cluster or time assignment is still costly. Cluster randomization wins once spillover is strong enough that exposure geometry dominates power. Switchbacks win when carryover becomes the main threat. The three real-data cases then instantiate different parts of this decision logic (as seen in Table~\ref{tab:main-results}). Criteo remains in a regime where user randomization is robust, Open Bandit \texttt{bts/men} moves toward temporal blocking under adaptive support stress, and KuaiRand favors clustered assignment under repeated recommendation exposure.

\begin{figure}[t]
\centering
\includegraphics[width=0.65\linewidth]{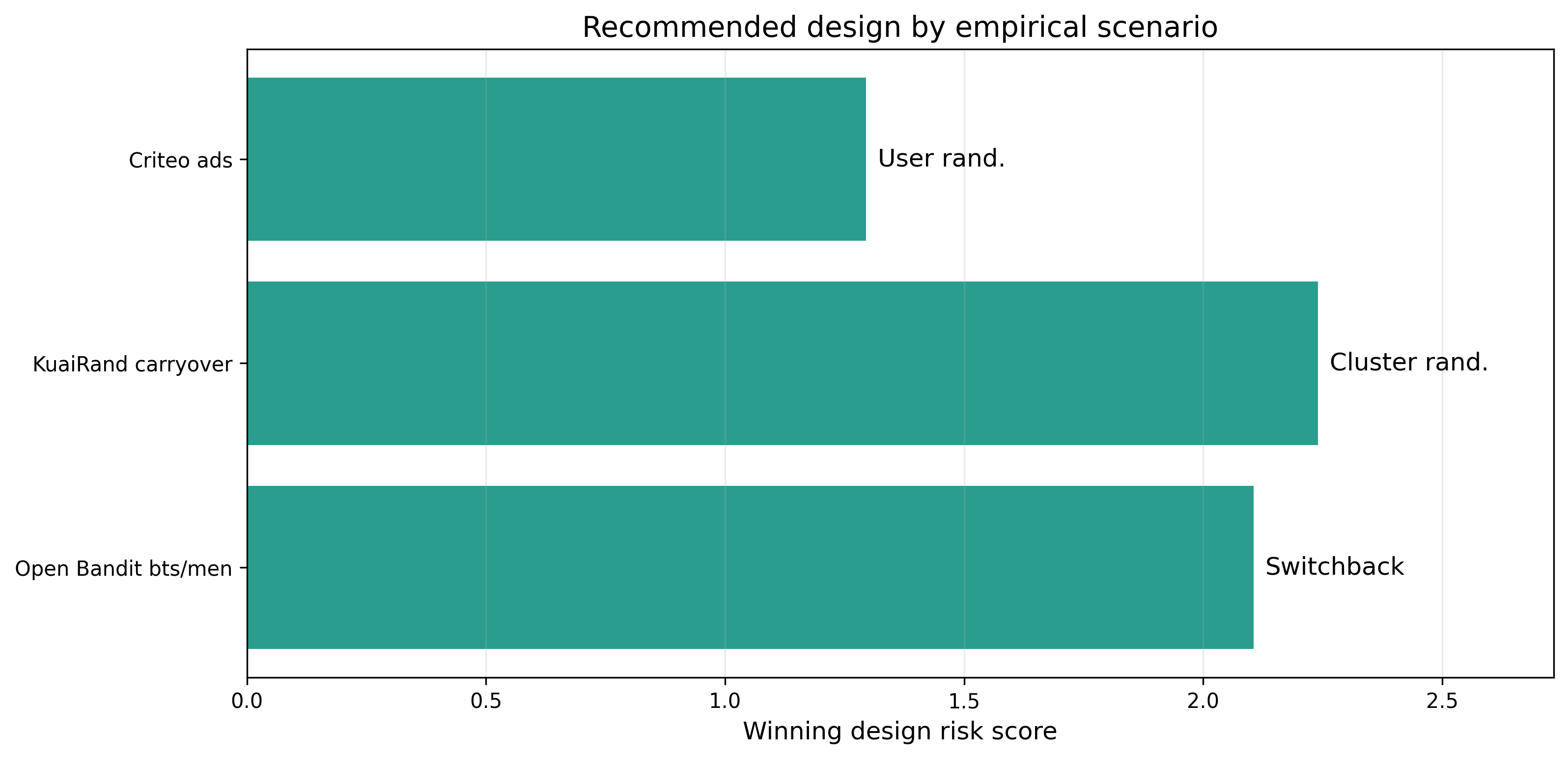}
\caption{\small Cross-domain robust design recommendations. The figure compares the main real-data cases using the same design catalog and planning-risk components. The selected design changes across domains: user randomization for Criteo, switchbacks for Open Bandit \texttt{bts/men}, and cluster randomization for KuaiRand. Figure~\ref{fig:main-risk-rankings} gives the corresponding robust-risk rankings, while Appendix~\ref{sec:appendix-empirical} reports frontier plots and supporting cases.}
\label{fig:cross-domain-results}
\end{figure}

\begin{figure}[t]
\centering
\begin{subfigure}{0.48\linewidth}
\centering
\includegraphics[width=\linewidth]{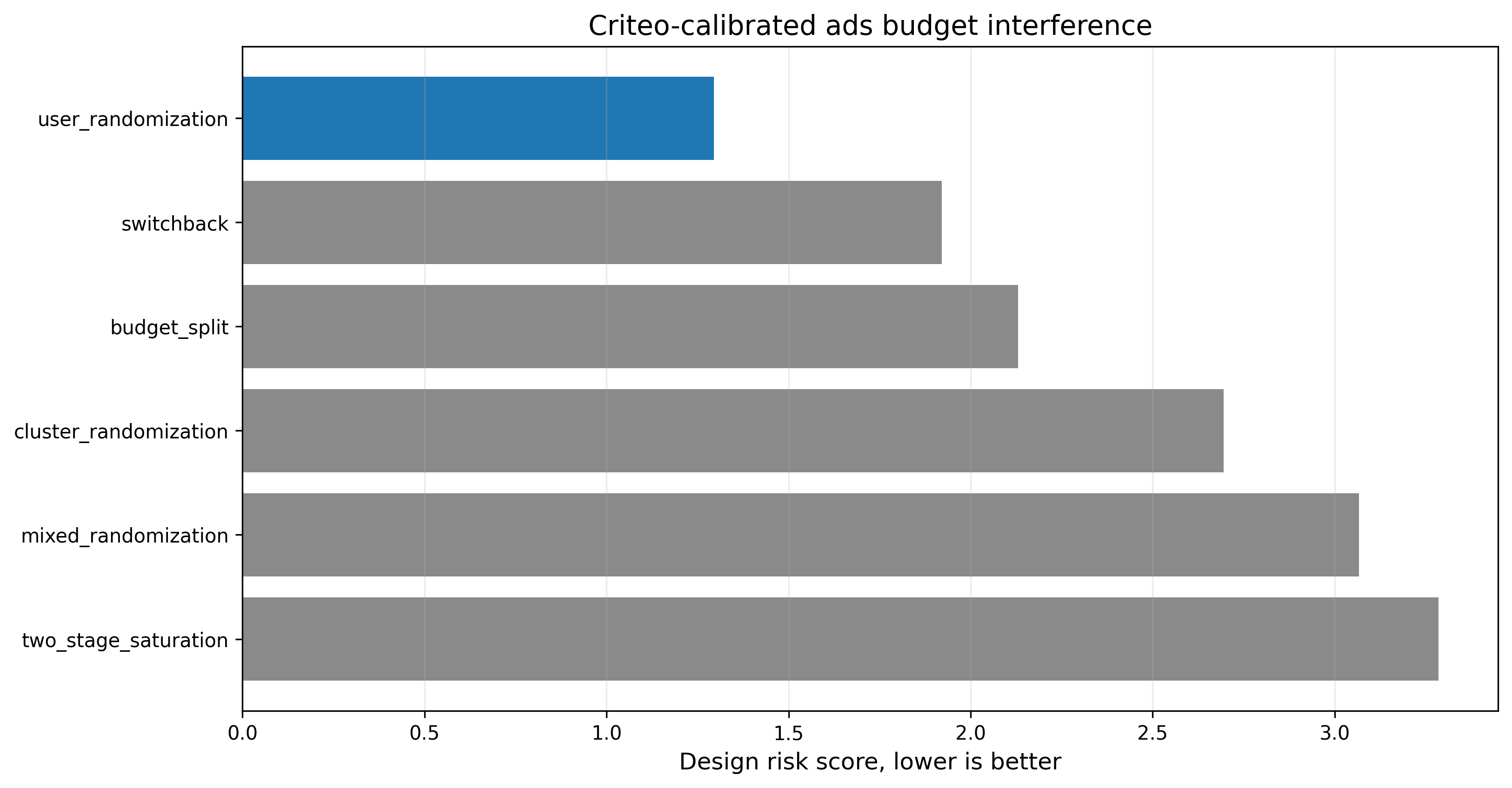}
\caption{Criteo ads}
\end{subfigure}
\hfill
\begin{subfigure}{0.48\linewidth}
\centering
\includegraphics[width=\linewidth]{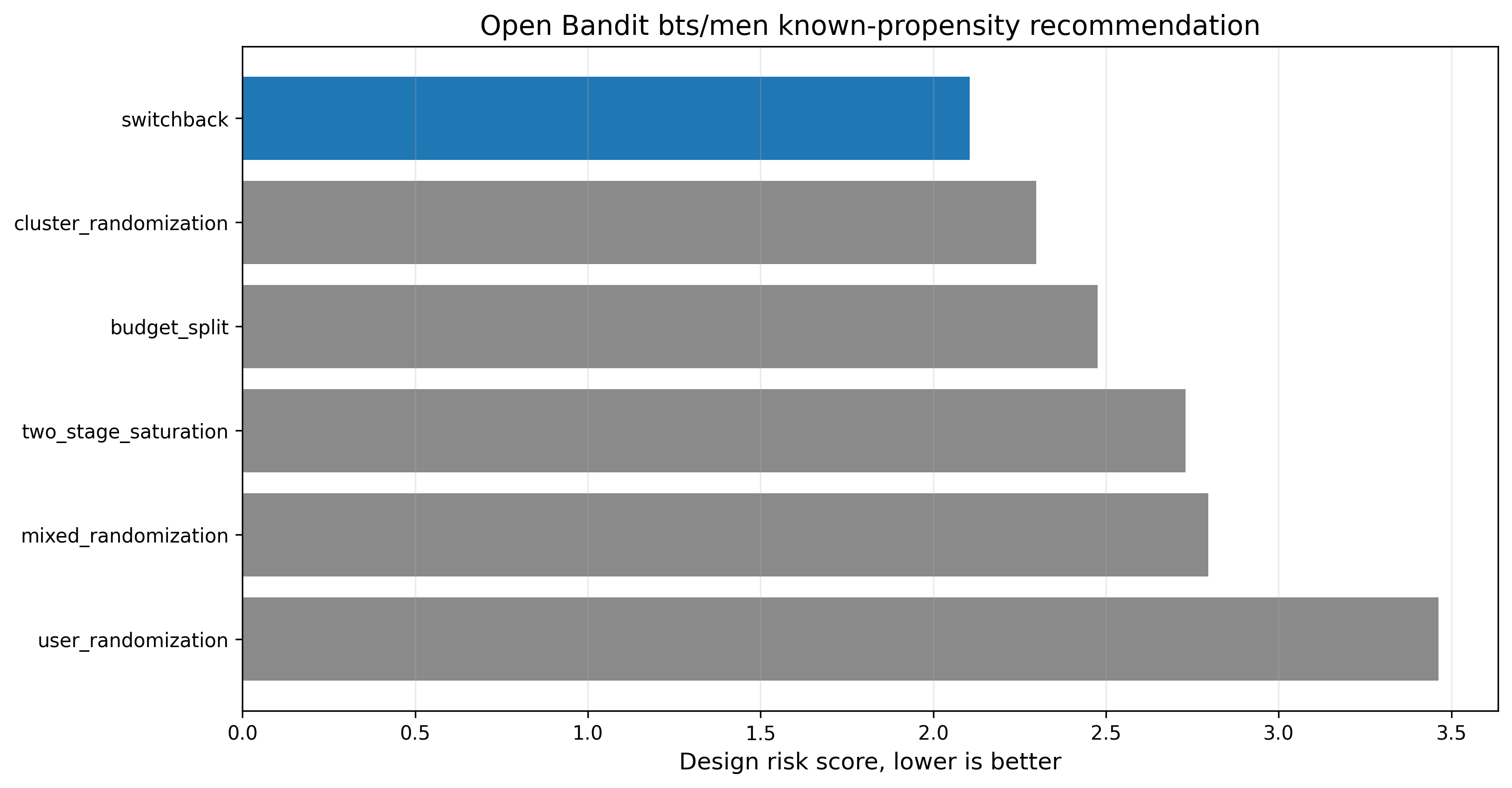}
\caption{Open Bandit \texttt{bts/men}}
\end{subfigure}

\vspace{0.5em}

\begin{subfigure}{0.58\linewidth}
\centering
\includegraphics[width=\linewidth]{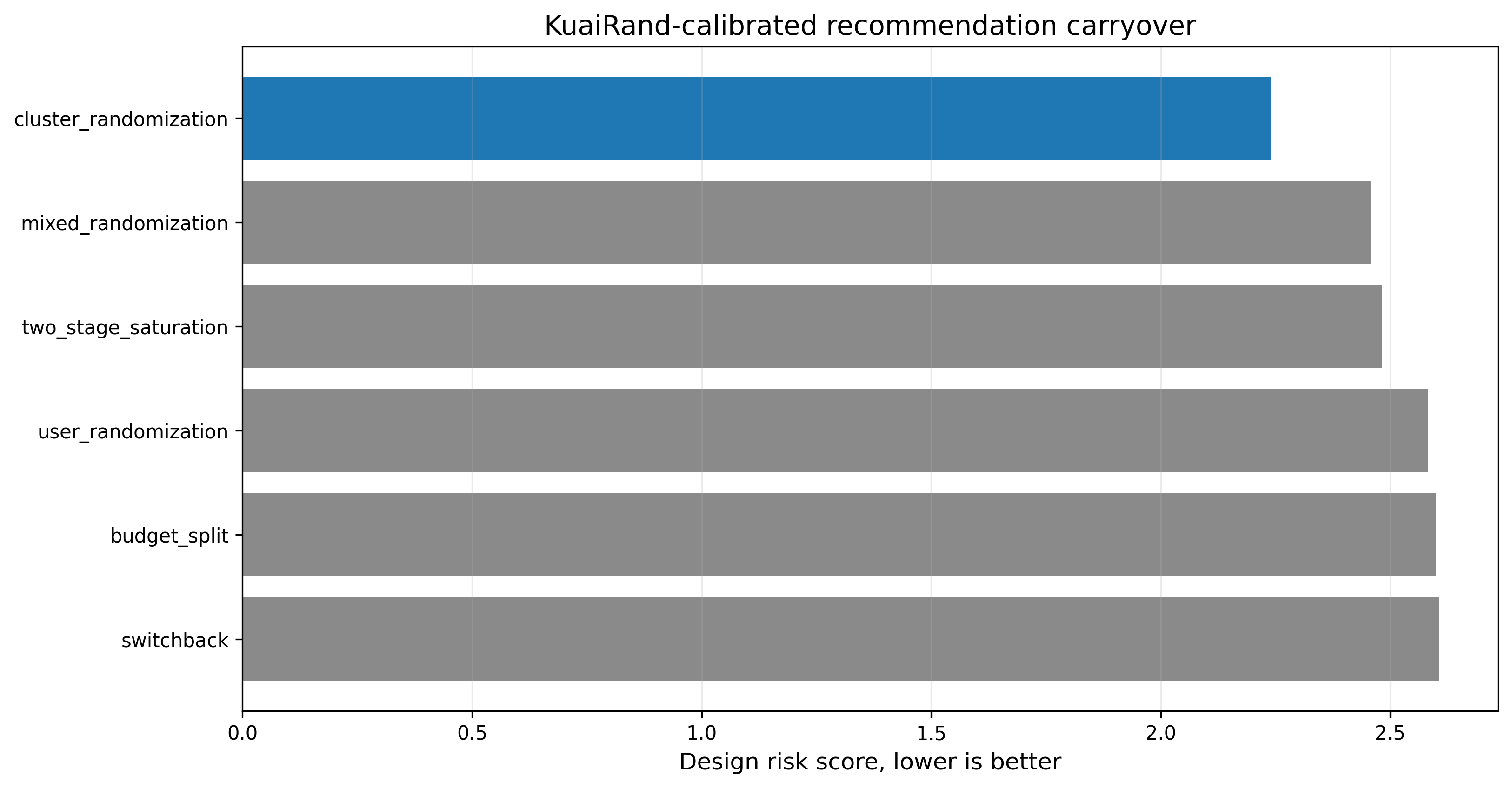}
\caption{KuaiRand}
\end{subfigure}
\caption{\small Robust-risk rankings for the three main empirical cases. Each panel ranks the six implementable designs by estimated worst-case planning risk over the calibrated ambiguity set. The figure explains the decisions in Table~\ref{tab:main-results}: Criteo has a separated user-randomization leader, Open Bandit \texttt{bts/men} has a closer contest between switchbacks and clustered alternatives, and KuaiRand has a flat risk surface that keeps all six designs in the tolerance shortlist.}
\label{fig:main-risk-rankings}
\end{figure}

\paragraph{Ads budget spillover.}
The Criteo analysis uses 300,000 logged examples to calibrate a binary visit outcome with a visit lift of \(0.00298\) and a conversion lift of \(0.00148\). The semi-synthetic layer makes advertiser budget spillover the uncertain mechanism. In this calibrated case, user randomization has the lowest robust risk, \(1.295\), and is the only design in the tolerance shortlist. This does not imply that budget spillovers are irrelevant. Rather, the estimated tradeoff says that the additional protection offered by cluster, budget-split, or mixed designs is not worth their higher exposure mismatch, MDE, or operational burden under the Criteo-calibrated ambiguity set. Detailed Criteo risk and exposure-variance plots are shown in Appendix~\ref{sec:appendix-empirical}.

\paragraph{Known propensities and adaptive logging.}
The Open Bandit case isolates a different source of uncertainty. The \texttt{random/men} slice has constant logging propensities and an IPS effective-sample share of 1.00, so it behaves like a clean randomized baseline. The \texttt{bts/men} slice has adaptive behavior-policy propensities, with propensities ranging from \(0.00006\) to \(0.594\) and an IPS effective-sample share of \(0.0517\). The paper uses \texttt{bts/men} in the main body because it is the harder design-selection problem. The implementation converts the low effective-sample share into a design-sensitive support-stress adjustment added to contamination/carryover and estimand mismatch. The selector chooses switchbacks with robust risk \(2.105\), followed by cluster randomization and budget split in the tolerance shortlist. This implies that under the calibrated recommendation and carryover ambiguity set plus severe support stress, switchbacks have the lowest robust risk, while the shortlist acknowledges that cluster and budget-split designs remain plausible under alternative exposure channels. The randomized Open Bandit slice is reported in Appendix~\ref{sec:appendix-empirical} as a known-propensity baseline.

\paragraph{Member-experience carryover.}
The KuaiRand experiment uses 250,000 short-video recommendation events from 5,818 users and 7,517 items. The click rate is 17.7\%, which provides enough outcome signal to compare designs, while repeated exposure creates a plausible carryover channel. Through experiments it was found that cluster randomization minimizes robust risk at \(2.240\). The tolerance shortlist is deliberately long and retains all six designs, because mixed randomization, two-stage saturation, user randomization, budget split, and switchbacks are all within the uncertainty band. This is exactly the role of the robust selector. It distinguishes the current lowest-risk recommendation from designs that remain plausible when the exposure mechanism is not fully known.

Taken together (Figure \ref{fig:main-risk-rankings}), the regime reversal and the three real-data cases support the paper's central formulation. The empirical contribution is a reproducible workflow for turning uncertain exposure mechanisms into a robust design-selection decision. The appendix provides additional stress tests by separating theorem-level diagnostics from domain-level results. It reports supporting risk rankings, exposure-variance frontiers, the Open Bandit randomized baseline, the MovieLens graph-interference frontier, the transport bias bound, the minimax tightness of the exposure geometry penalty, catalog approximation, and assignment-unit MDE scaling.

\section{Conclusion}

Interference changes experiment planning from estimator selection to design selection. This paper formulates that choice as robust optimization over uncertain exposure mechanisms. Exposure geometry supplies the bias proxy, finite catalogs supply the implementable action set, and the robust selector converts estimated risk components into either a design recommendation or an uncertainty shortlist. Across Criteo, Open Bandit, and KuaiRand datasets, the same workflow produces different recommendations from different exposure uncertainties, while the appendix verifies that the theoretical components behave as claimed. The resulting message is narrow. Platforms should not choose a randomization design by habit when spillovers, carryover, budgets, or producer exposure may matter. Instead, they should specify the plausible exposure mechanisms, estimate the geometry and assignment-unit tradeoffs, and select the design that is robust to that uncertainty.

Future work is expected to focus on learning ambiguity sets directly from pre-experiment telemetry, calibrate planning-risk weights from organizational loss functions, and connect the selector to adaptive experiment governance after launch. The framework also suggests a practical systems direction. Experiment platforms can expose robust design-selection diagnostics before assignment begins, so that design choice becomes an auditable decision rather than an informal pre-analysis convention.

\paragraph{Code availability.}
The code and notebooks supporting the empirical analysis are available at \url{https://github.com/p-shekhar/Online-experiment-design.git}.

\bibliographystyle{plainnat}
\bibliography{references}

\appendix

\section{Additional Empirical Diagnostics}
\label{sec:appendix-empirical}

The appendix separates three kinds of supporting evidence. The first kind checks the theory under controlled conditions where the relevant quantities are known or can be swept directly. The second kind reports detailed case-level plots that are summarized in the main empirical results (Table \ref{tab:main-results}) but not shown there because of space. The third kind adds supporting cases, including the Open Bandit randomized baseline and a MovieLens graph-recommendation substrate. Table~\ref{tab:theory-empirical-map} gives the bridge between formal results and empirical artifacts.

\begin{table}[t]
\centering
\scriptsize
\caption{Relationship between theoretical results and empirical diagnostics. Main-body rows use real datasets to populate the robust selector; appendix rows isolate theorem-level behavior.}
\label{tab:theory-empirical-map}
\setlength{\tabcolsep}{2.5pt}
\renewcommand{\arraystretch}{1.08}
\begin{tabular}{>{\raggedright\arraybackslash}p{0.22\linewidth}
                >{\raggedright\arraybackslash}p{0.27\linewidth}
                >{\raggedright\arraybackslash}p{0.18\linewidth}
                >{\raggedright\arraybackslash}p{0.25\linewidth}}
\toprule
Result & Empirical diagnostic & Figure/table location & Finding \\
\midrule
Lemma~\ref{lem:no-universal-design} and Proposition~\ref{prop:regime-threshold}
& Regime transition, appendix regime-reversal diagnostic, and real-data shortlists
& Fig.~\ref{fig:regime-transition}; Fig.~\ref{fig:appendix-selector-diagnostics}(a); Table~\ref{tab:main-results}
& User, mixed, cluster, and switchback designs win under different mechanisms; the main real-data cases select different designs and retain different tolerance shortlists. \\

Theorem~\ref{thm:geometry-bias}
& Transport-bound stress test
& Fig.~\ref{fig:appendix-theory-checks}(a)
& Constructed exposure-response checks meet the empirical \(L\Wass\) transport bound, confirming the bound and illustrating tight cases. \\

Theorem~\ref{thm:geometry-necessity}
& Minimax exposure-geometry construction
& Fig.~\ref{fig:appendix-theory-checks}(b)
& Lipschitz response functions attain the transport penalty, confirming that the Wasserstein term is not merely a loose heuristic. \\

Proposition~\ref{prop:catalog-approximation}
& Catalog-grid refinement diagnostic
& Fig.~\ref{fig:appendix-theory-checks}(c)
& The catalog approximation gap stays below the Lipschitz-net bound as the design grid is refined. \\

Theorem~\ref{thm:geometry-robust-selector}
& Cross-domain shortlists and synthetic oracle comparison
& Table~\ref{tab:main-results}; Fig.~\ref{fig:main-risk-rankings}; Fig.~\ref{fig:appendix-selector-diagnostics}(b)
& The real-data selector returns tolerance-based shortlists under the same decision rule; in the controlled oracle diagnostic, the selected design equals the oracle robust design and satisfies the excess-risk certificate. \\

Assignment-unit MDE calculation
& MDE grid over duration and effective assignment units
& Fig.~\ref{fig:appendix-theory-checks}(d); Fig.~\ref{fig:appendix-domain-frontiers}
& Detectable effects shrink with duration and effective assignment-unit count, emphasizing why row count alone is not enough under interference. \\
\bottomrule
\end{tabular}
\end{table}

\begin{table}[t]
\centering
\scriptsize
\caption{Supporting appendix cases not used as the main cross-domain recommendations. The Open Bandit randomized slice is a known-propensity baseline, and MovieLens supplies an additional graph-recommendation substrate.}
\label{tab:appendix-supporting-cases}
\setlength{\tabcolsep}{3pt}
\renewcommand{\arraystretch}{1.08}
\begin{tabular}{>{\raggedright\arraybackslash}p{0.24\linewidth}>{\raggedright\arraybackslash}p{0.25\linewidth}>{\raggedright\arraybackslash}p{0.18\linewidth}>{\raggedleft\arraybackslash}p{0.10\linewidth}>{\raggedright\arraybackslash}p{0.16\linewidth}}
\toprule
Case & Role & Selected design & Risk & Tolerance shortlist \\
\midrule
Open Bandit \texttt{random/men} & Randomized known-propensity baseline with IPS ESS share \(1.00\) & Switchback & 2.182 & Switchback, cluster, two-stage, mixed, budget split \\
MovieLens graph & Appendix graph-recommendation substrate with user-item exposure & Cluster randomization & 1.976 & Cluster, budget split, two-stage, mixed \\
\bottomrule
\end{tabular}
\end{table}

The supporting cases in Table~\ref{tab:appendix-supporting-cases} serve two purposes. The Open Bandit randomized slice checks the same recommendation substrate when logged propensities are uniform and support is not the main stressor. Switchbacks remain the lowest-risk design in this slice, but the risk surface is broad enough to retain cluster randomization, two-stage saturation, mixed randomization, and budget split in the shortlist. This comparison sharpens the main-body interpretation of \texttt{bts/men}: the adaptive slice adds severe support stress, while the design recommendation is still governed by the calibrated recommendation and carryover ambiguity set. Figure~\ref{fig:openbandit-propensities} provides the corresponding propensity diagnostic. MovieLens dataset gives a separate user-item graph substrate. It is not used as a main empirical anchor because it is a broader recommendation graph stress test, but it confirms that the same selector can be populated outside the ads, bandit, and short-video cases.

\begin{figure}[t]
\centering
\includegraphics[width=0.74\linewidth]{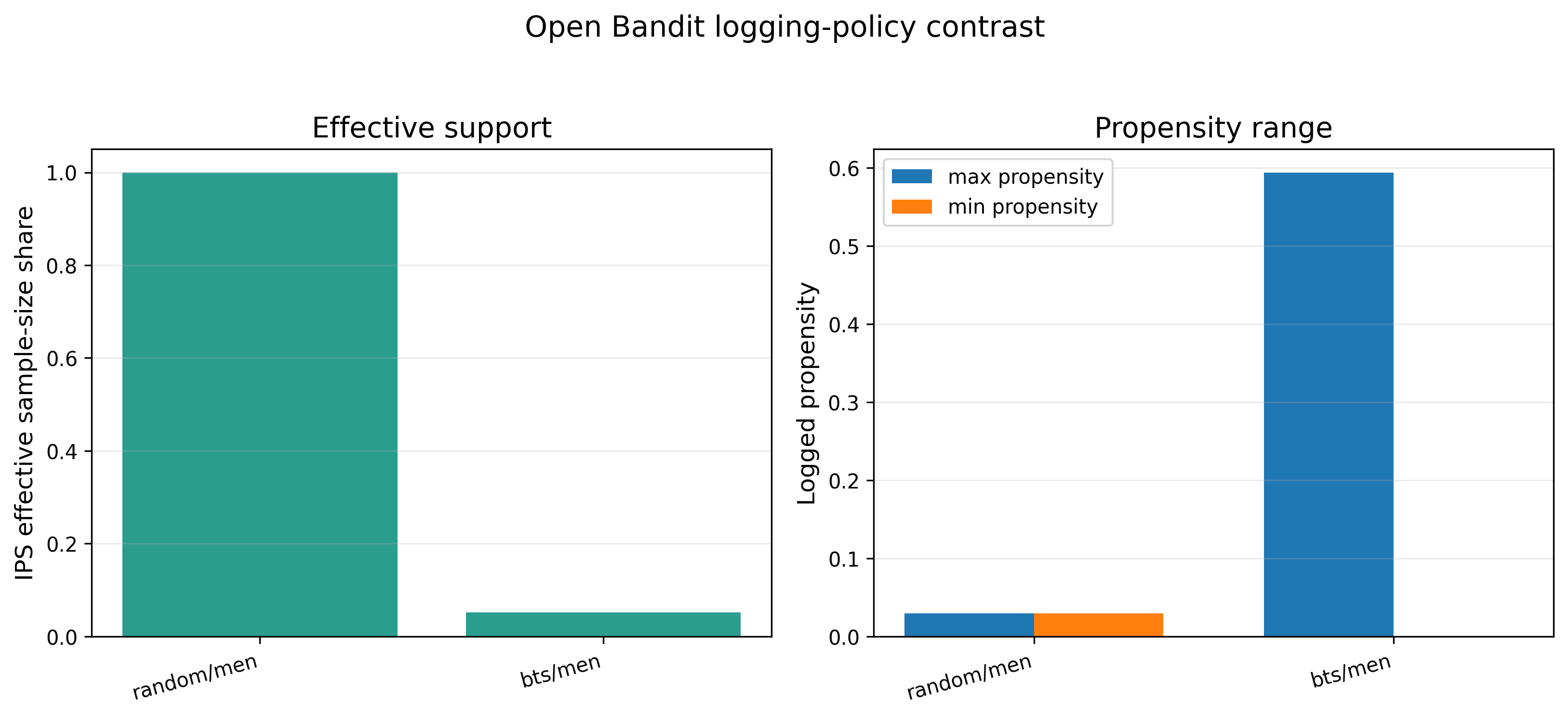}
\caption{\small Open Bandit propensity diagnostics. The randomized slice has constant propensities and full effective sample size, while the adaptive \texttt{bts/men} slice has highly variable propensities and an IPS effective-sample share of 5.17\%. This figure supports the main Open Bandit design-selection result by showing why the adaptive slice is the relevant stress test for known but uneven logging support.}
\label{fig:openbandit-propensities}
\end{figure}

Figure~\ref{fig:appendix-domain-risk-rankings} supplements the main robust-risk rankings by keeping the main cases and the MovieLens appendix case side by side. The Criteo panel shows why the recommendation is decisive. This is because the user randomization is separated from the next design after worst-case normalization. The Open Bandit panel shows a closer contest between switchbacks and cluster randomization, consistent with the main-body interpretation that adaptive support stress changes the preferred design but does not eliminate clustered alternatives. The KuaiRand panel explains the long shortlist based on the risk curve being relatively flat after the leader. The MovieLens panel gives the corresponding graph-interference pattern, where cluster randomization leads but two-stage saturation, budget split, and mixed randomization remain credible alternatives.

\begin{figure}[p]
\centering
\begin{subfigure}{0.48\linewidth}
\centering
\includegraphics[width=\linewidth]{figures/02_criteo_ads_design_risk.png}
\caption{Criteo ads}
\end{subfigure}
\hfill
\begin{subfigure}{0.48\linewidth}
\centering
\includegraphics[width=\linewidth]{figures/03_open_bandit_design_risk.png}
\caption{Open Bandit \texttt{bts/men}}
\end{subfigure}

\vspace{0.6em}

\begin{subfigure}{0.48\linewidth}
\centering
\includegraphics[width=\linewidth]{figures/04_kuairand_design_risk.png}
\caption{KuaiRand}
\end{subfigure}
\hfill
\begin{subfigure}{0.48\linewidth}
\centering
\includegraphics[width=\linewidth]{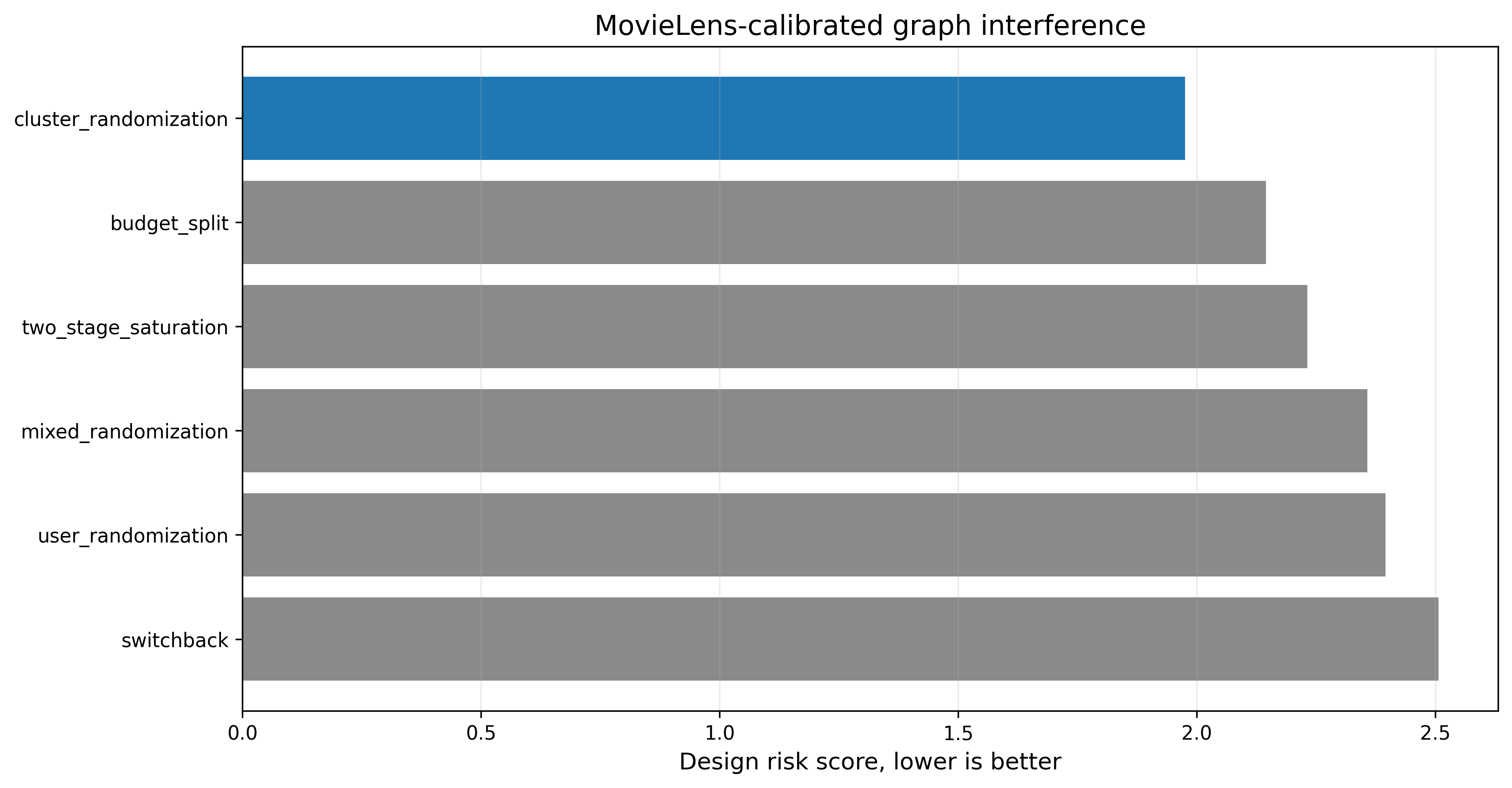}
\caption{MovieLens}
\end{subfigure}
\caption{\small Detailed robust-risk rankings for the domain cases. Panels (a)--(c) are summarized in Table~\ref{tab:main-results}; panel (d) is the graph-recommendation appendix case summarized in Table~\ref{tab:appendix-supporting-cases}.}
\label{fig:appendix-domain-risk-rankings}
\end{figure}

Figure~\ref{fig:appendix-domain-frontiers} decomposes those rankings into exposure-distance and assignment-variance tradeoffs. Here in each panel, marker size represents the design's operational-cost score, so visually larger points correspond to designs that are harder to deploy, monitor, or interpret operationally. This is the diagnostic counterpart of the risk definition in Algorithm~\ref{alg:robust-selector}. It shows why a design can lose even when it is strong on one component. Switchbacks can have low contamination but high exposure mismatch or MDE. User randomization can have low operating cost but high assignment-unit variance under clustered or repeated-exposure mechanisms. Clustered or saturation-style designs can improve exposure geometry while paying in power or implementation burden. The frontiers therefore make the selector auditable as the final recommendation is not a black-box label, but the outcome of visible component tradeoffs.

\begin{figure}[p]
\centering
\begin{subfigure}{0.48\linewidth}
\centering
\includegraphics[width=\linewidth]{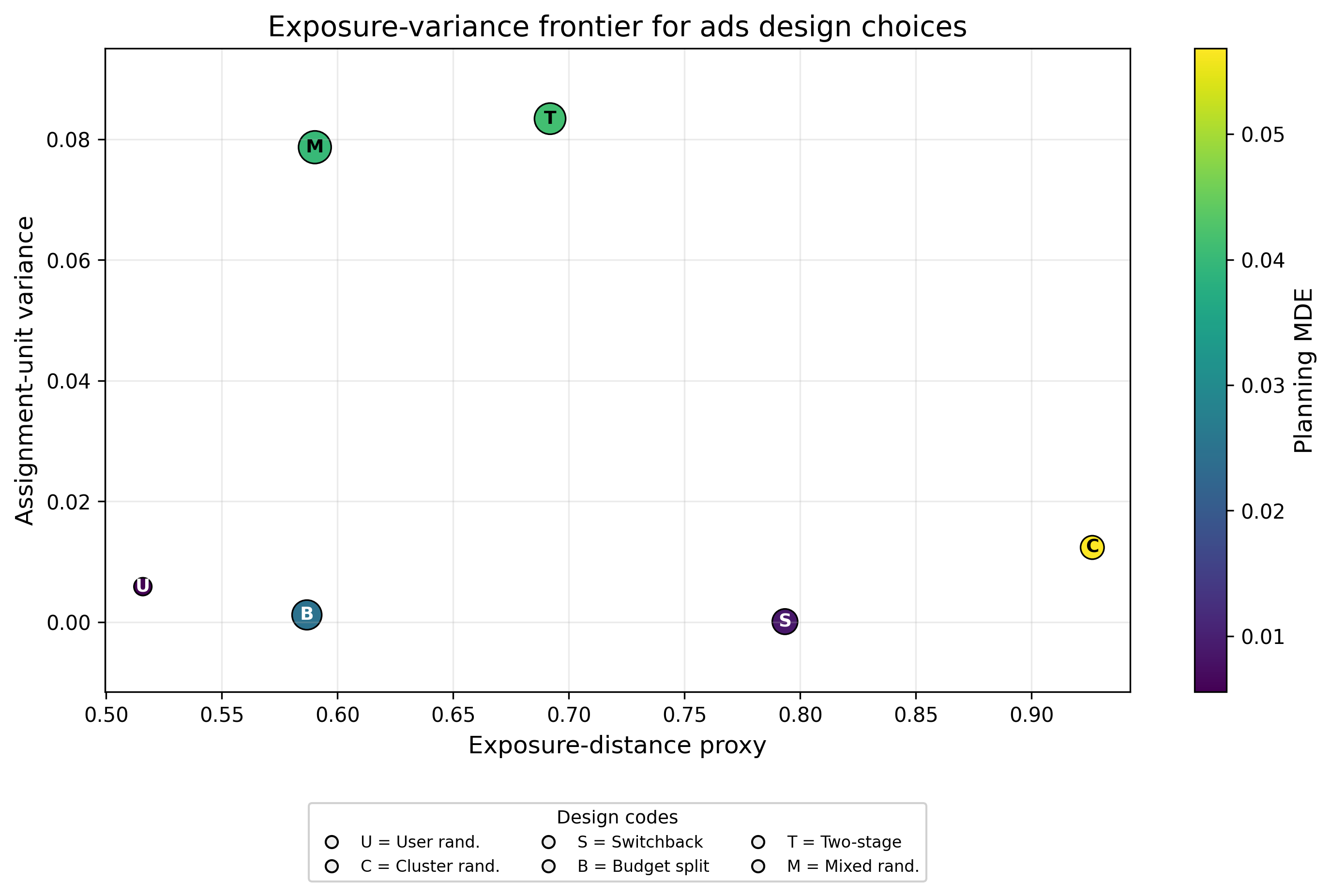}
\caption{Criteo ads}
\end{subfigure}
\hfill
\begin{subfigure}{0.48\linewidth}
\centering
\includegraphics[width=\linewidth]{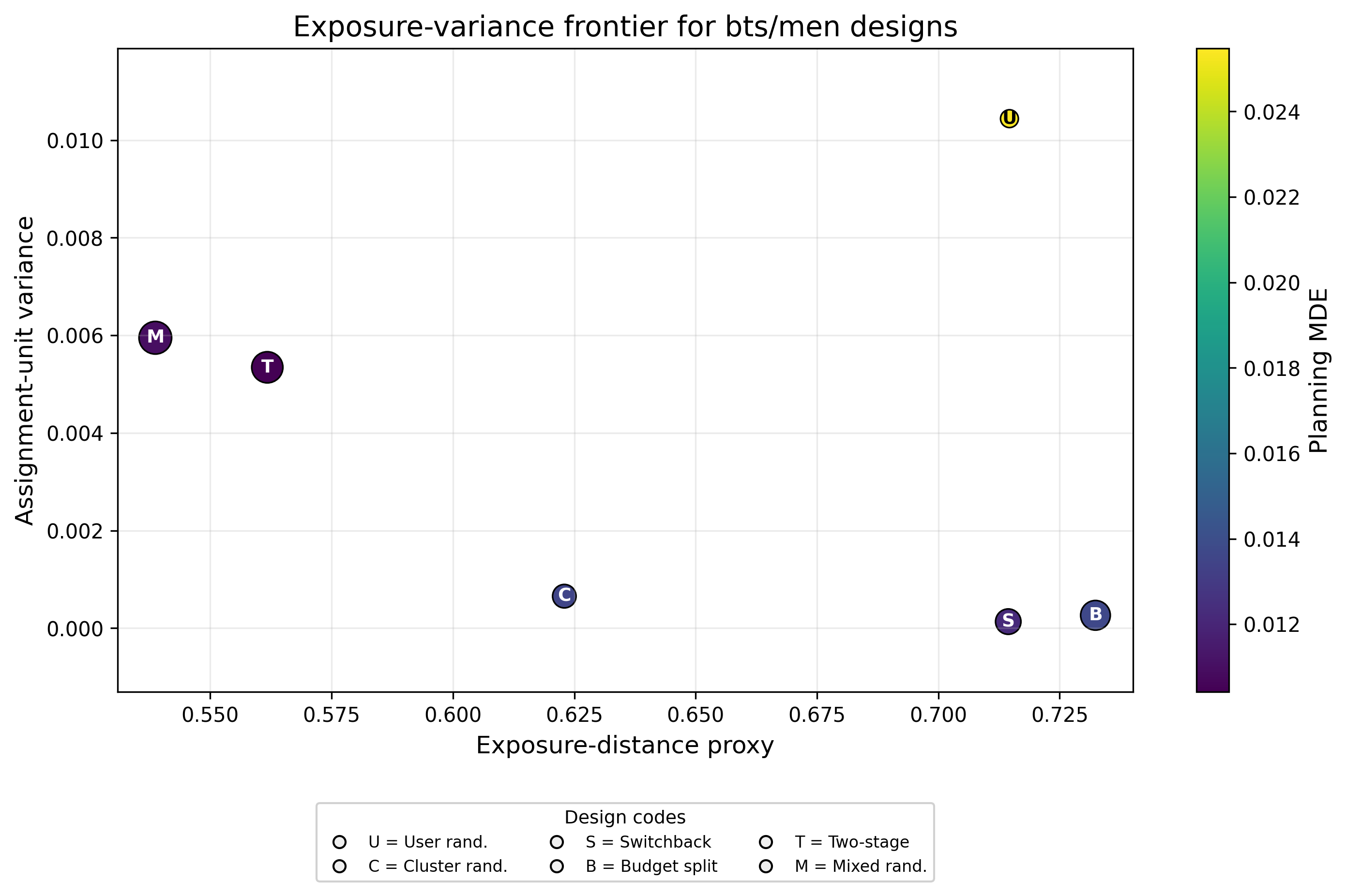}
\caption{Open Bandit \texttt{bts/men}}
\end{subfigure}

\vspace{0.6em}

\begin{subfigure}{0.48\linewidth}
\centering
\includegraphics[width=\linewidth]{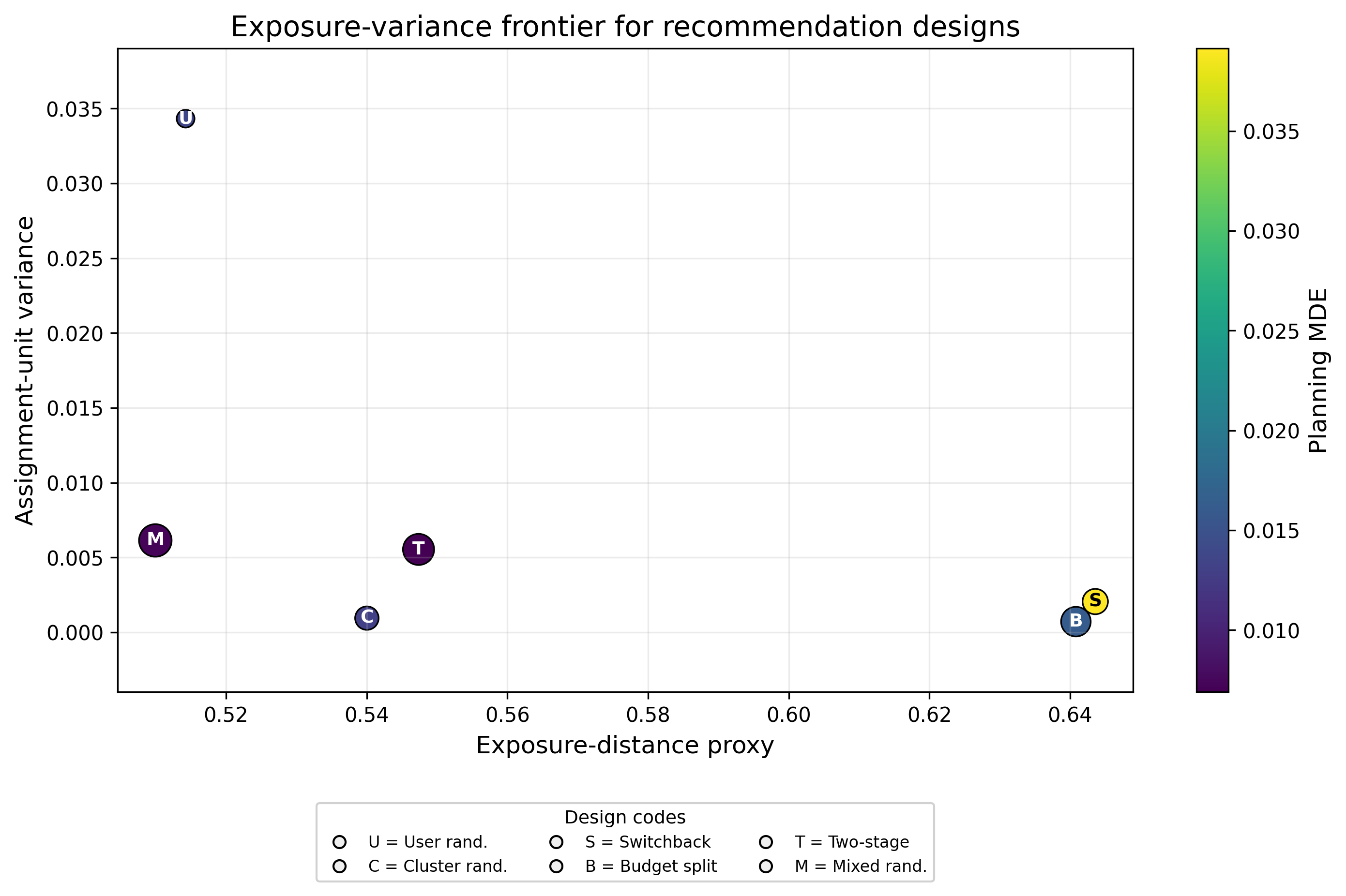}
\caption{KuaiRand}
\end{subfigure}
\hfill
\begin{subfigure}{0.48\linewidth}
\centering
\includegraphics[width=\linewidth]{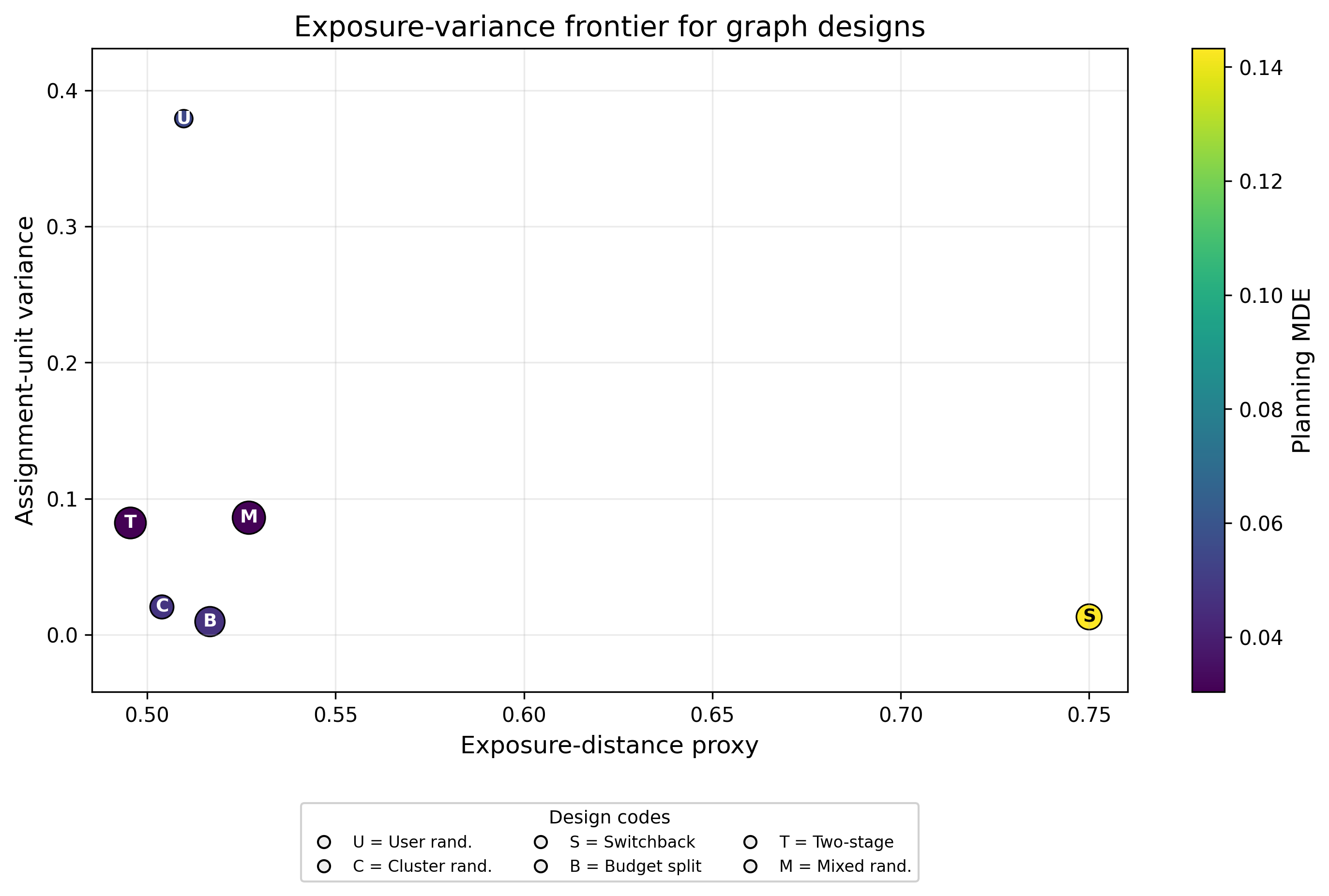}
\caption{MovieLens}
\end{subfigure}
\caption{\small Exposure--variance frontiers for the domain cases. The horizontal axis is the exposure-distance proxy and the vertical axis is assignment-unit variance. Color encodes planning MDE, while marker size is proportional to the pre-specified operational-cost score. These plots show the component tradeoffs behind the aggregate robust-risk recommendations.}
\label{fig:appendix-domain-frontiers}
\end{figure}

% The selector diagnostics in Figure~\ref{fig:appendix-selector-diagnostics} connect the empirical workflow back to Lemma~\ref{lem:no-universal-design}, Proposition~\ref{prop:regime-threshold}, and Theorem~\ref{thm:geometry-robust-selector}. Panel (a) displays a controlled regime-reversal map in which the selected design changes as the exposure mechanism moves from row-local effects to mixed spillover, clustered spillover, and carryover. Panel (b) compares a noisy planning-risk selector to a higher-replication oracle. The selected design agrees with the oracle in this controlled check, and the risk gap remains inside the theorem-guided tolerance. This is the appendix evidence that the shortlist logic is not merely descriptive; it is the finite-sample decision object predicted by the theory.

The selector diagnostics in Figure~\ref{fig:appendix-selector-diagnostics} connect the empirical workflow back to Lemma~\ref{lem:no-universal-design}, Proposition~\ref{prop:regime-threshold}, and Theorem~\ref{thm:geometry-robust-selector}. These diagnostics use the controlled synthetic platform rather than one of the public datasets, because the goal is to check selector behavior in a setting where the exposure mechanism and true launch effect are known. Panel~(a) sweeps the mechanism from weak row-local interference to mixed spillover, clustered spillover, and carryover-dominant interference. The selected design changes across this sweep, showing that the selector is not hard-coded to prefer a single randomization scheme. Panel~(b) compares the design chosen by the ordinary empirical selector, computed from a small number (45) of assignment replays, with an oracle benchmark computed from many more (260) simulation replications under the same data-generating mechanism. The oracle is not an external ground truth from production; it is a high-replication approximation to the population planning-risk minimizer in the controlled simulation. In this diagnostic, the empirical selector chooses the same design as the oracle, and the robust-risk gap remains inside the theorem-guided tolerance. This supports the interpretation of the shortlist as a finite-sample decision object rather than a purely descriptive ranking.

\begin{figure}[t]
\centering
\begin{subfigure}{0.48\linewidth}
\centering
\includegraphics[width=\linewidth]{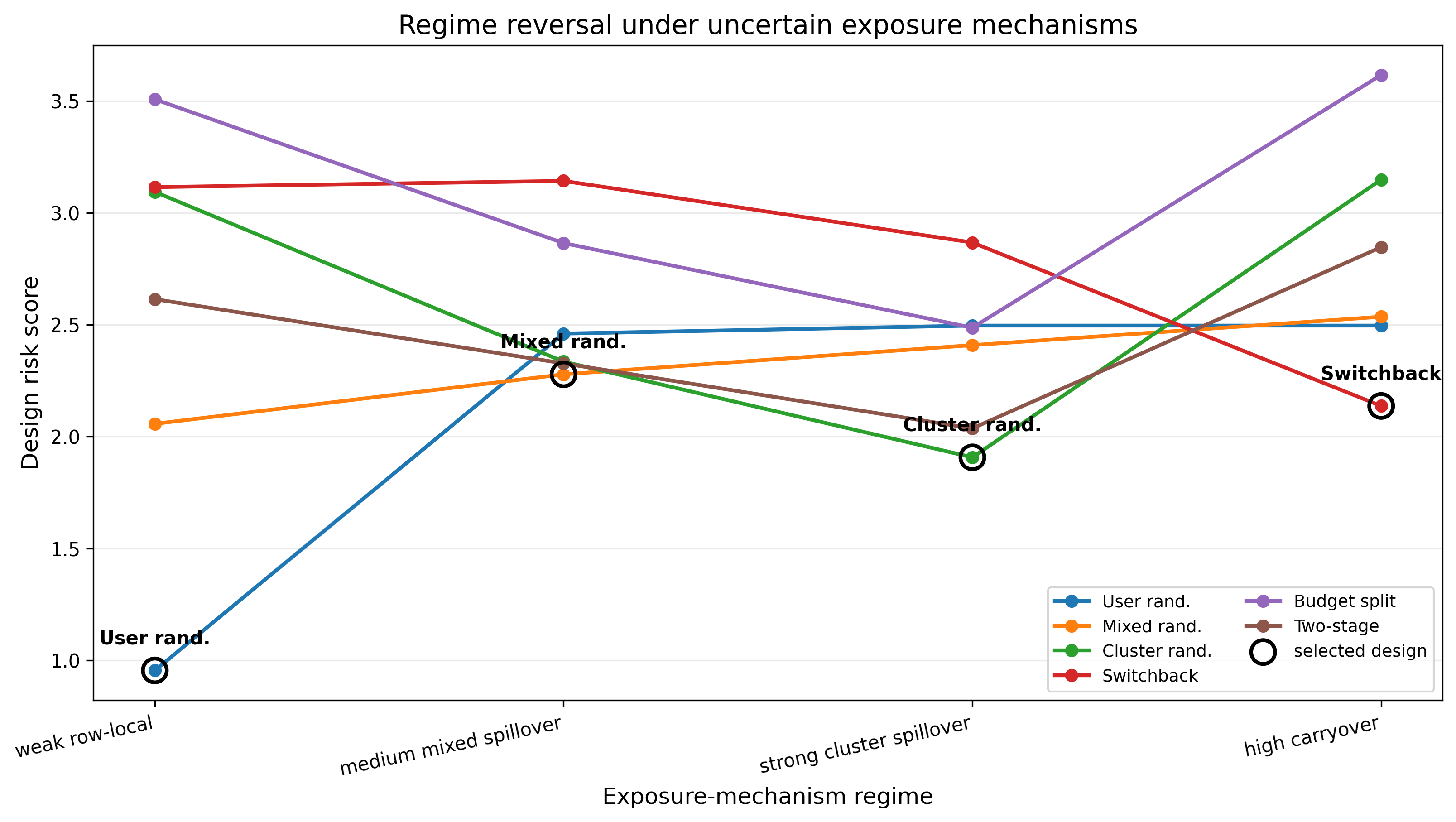}
\caption{Regime-reversal map}
\end{subfigure}
\hfill
\begin{subfigure}{0.48\linewidth}
\centering
\includegraphics[width=\linewidth]{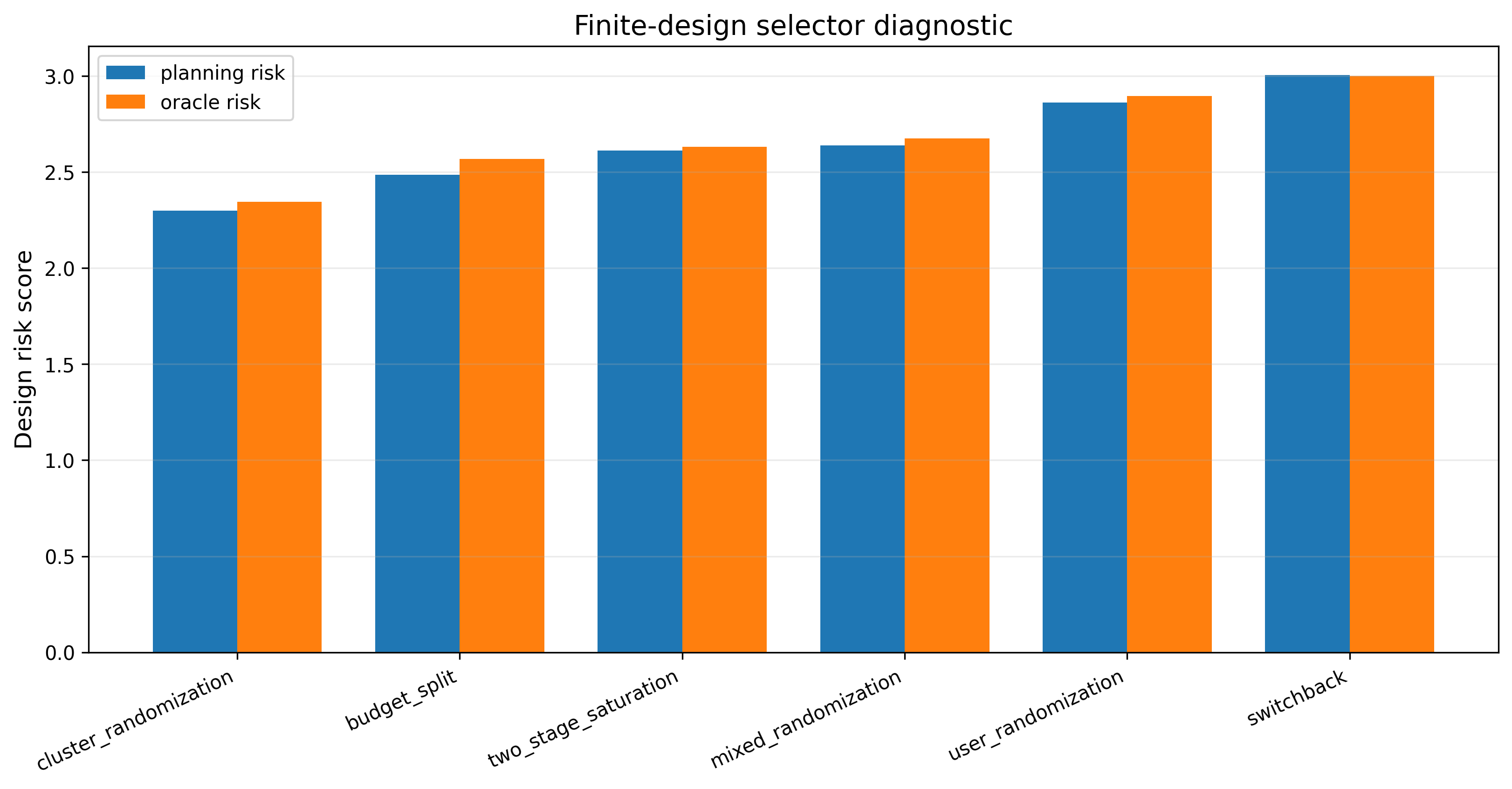}
\caption{Selector diagnostic}
\end{subfigure}
\caption{\small Appendix selector diagnostics. Panel (a) shows a controlled regime-reversal map, where the selected design changes as the dominant exposure mechanism changes. Panel (b) compares empirical and oracle planning risks; the selected design matches the oracle in the diagnostic and satisfies the excess-risk certificate from Theorem~\ref{thm:geometry-robust-selector}.}
\label{fig:appendix-selector-diagnostics}
\end{figure}

Figure~\ref{fig:appendix-theory-checks} isolates the mathematical components of the framework. All four panels in Figure~\ref{fig:appendix-theory-checks} are controlled synthetic diagnostics from Notebook~07 (in the codebase) rather than public-dataset results. Panels~(a) and~(b) study the exposure-geometry term in Theorems~\ref{thm:geometry-bias} and~\ref{thm:geometry-necessity}. In panel~(a), the simulated exposure feature is one-dimensional. The experimental exposure distribution is a beta-distributed baseline, and the launch exposure distribution is obtained by shifting that exposure by a controlled amount. For an \(L\)-Lipschitz exposure-response function \(r\), the theorem gives the upper bound
\[
\left|
\int r\,dP_d^\theta-\int r\,dP_\star^\theta
\right|
\le
L\,W_1(P_d^\theta,P_\star^\theta).
\]
The horizontal axis plots the right-hand side \(L\,W_1(P_d^\theta,P_\star^\theta)\), and the vertical axis plots the observed exposure-response bias. The scenarios vary the exposure shift and the Lipschitz constant. \textit{The points lie on or below the equality line, confirming the transport-based bias bound in the constructed cases.}

Panel~(b) checks the minimax statement behind the same geometry term. Here the constructed measures are point-mass-like exposure distributions separated by a shift \(\delta\), and the response function is \(r(x)=Lx\). This response is \(L\)-Lipschitz and attains the Wasserstein dual in this ordered one-dimensional construction, so
\[
\left|
\int r\,dP-\int r\,dQ
\right|
=
L\,W_1(P,Q).
\]
The plotted quantity is the ratio of the attained gap to the minimax penalty \(L\,W_1(P,Q)\). A ratio of one means that the upper bound is tight. The blue, orange, and green curves correspond to different Lipschitz constants, but they overlap because all three attain the minimax equality exactly in this construction. This is why the visible points appear green even though the legend contains three values of \(L\). The minimax interpretation matters because it says the Wasserstein penalty is not simply a conservative heuristic. If the analyst assumes only Lipschitz exposure response, then an adversarial response function in that class can make the bias as large as the penalty.

Panel~(c) studies Proposition~\ref{prop:catalog-approximation}. The code defines a smooth one-dimensional robust-risk surface over a continuous design parameter and then restricts attention to finite catalogs of increasing size. The plotted catalog gap is
\[
\min_{d\in\D_{\mathrm{cat}}}\mathcal R_{\mathrm{rob}}(d)
-
\inf_{d\in\mathfrak D}\mathcal R_{\mathrm{rob}}(d),
\]
while the upper-bound curve is the Lipschitz-net bound
\[
L_{\mathfrak D}\eta.
\]
As the catalog becomes finer, the net radius \(\eta\) shrinks and the observed approximation gap remains below the bound. This diagnostic supports the use of finite implementable design catalogs rather than an unrestricted search over all randomization schemes.

Panel~(d) studies the assignment-unit MDE calculation used in the planning risk. For each design, the code fixes a design-specific number of effective assignment units per week and a design-specific assignment-unit standard deviation. It then computes
\[
\MDE_d(T)
=
(z_{1-\alpha/2}+z_{1-\beta})
\sqrt{\frac{2\sigma_d^2}{N_d(T)}}.
\]
The panel shows that detectable effects shrink with the effective number of assignment units \(N_d(T)\) and with lower assignment-unit variance \(\sigma_d^2\). The relevant sample size is therefore not the number of logged rows, but the number of independent units produced by the design. Together, these four diagnostics validate the components that the robust selector aggregates in the real-data experiments.

\begin{figure}[p]
\centering
\begin{subfigure}{0.48\linewidth}
\centering
\includegraphics[width=\linewidth]{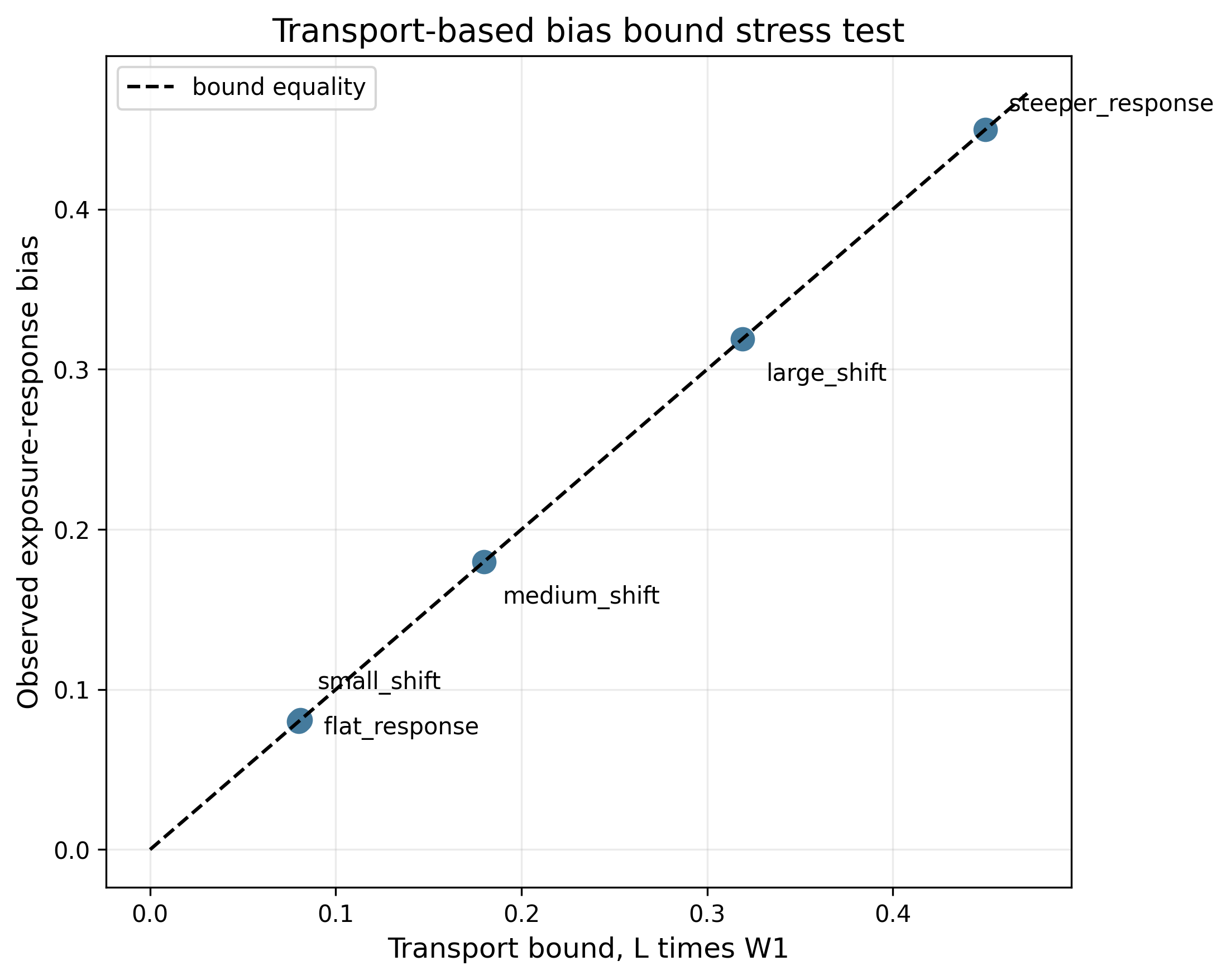}
\caption{Transport bias}
\end{subfigure}
\hfill
\begin{subfigure}{0.48\linewidth}
\centering
\includegraphics[width=\linewidth]{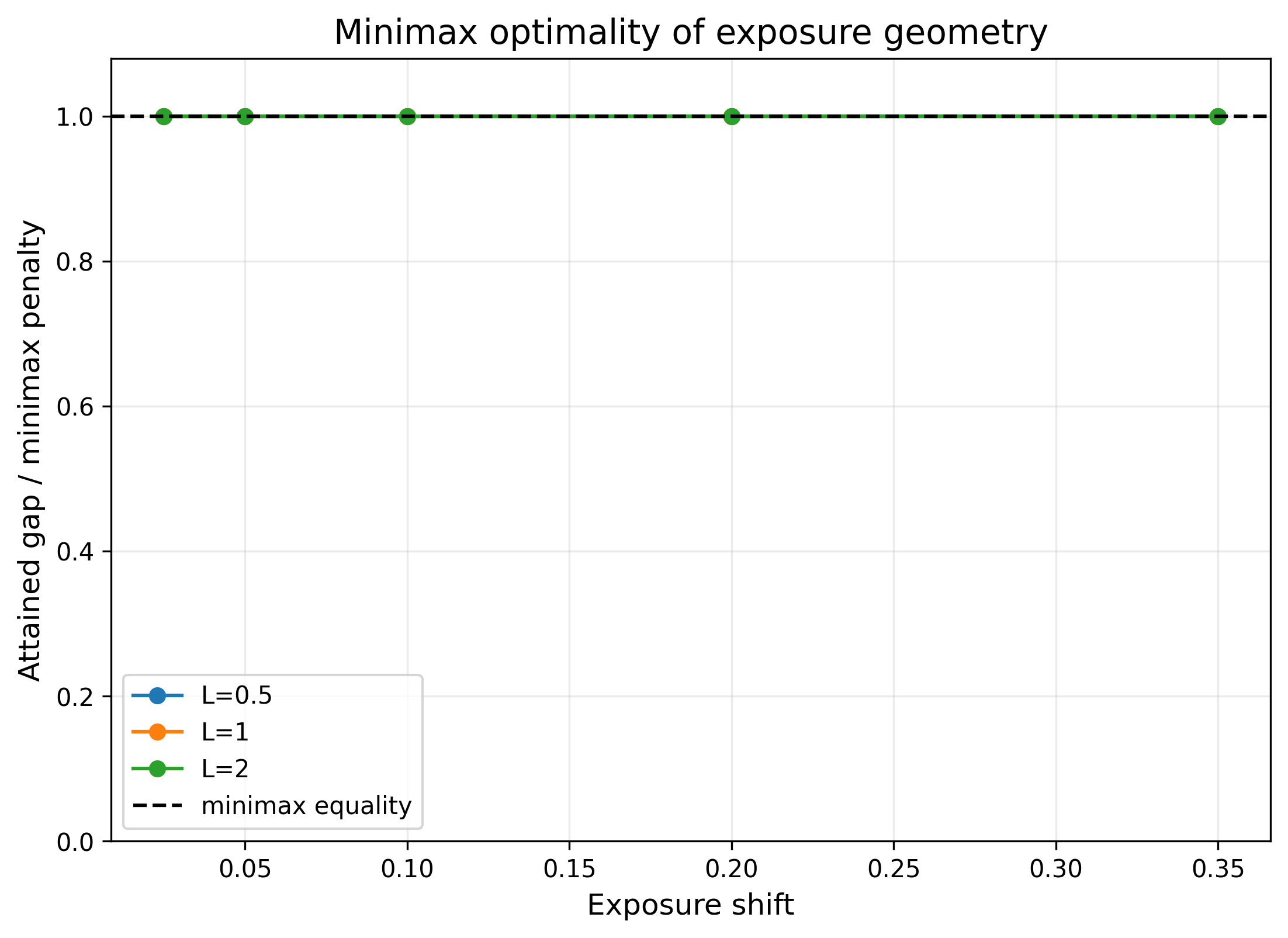}
\caption{Minimax tightness}
\end{subfigure}

\vspace{0.6em}

\begin{subfigure}{0.48\linewidth}
\centering
\includegraphics[width=\linewidth]{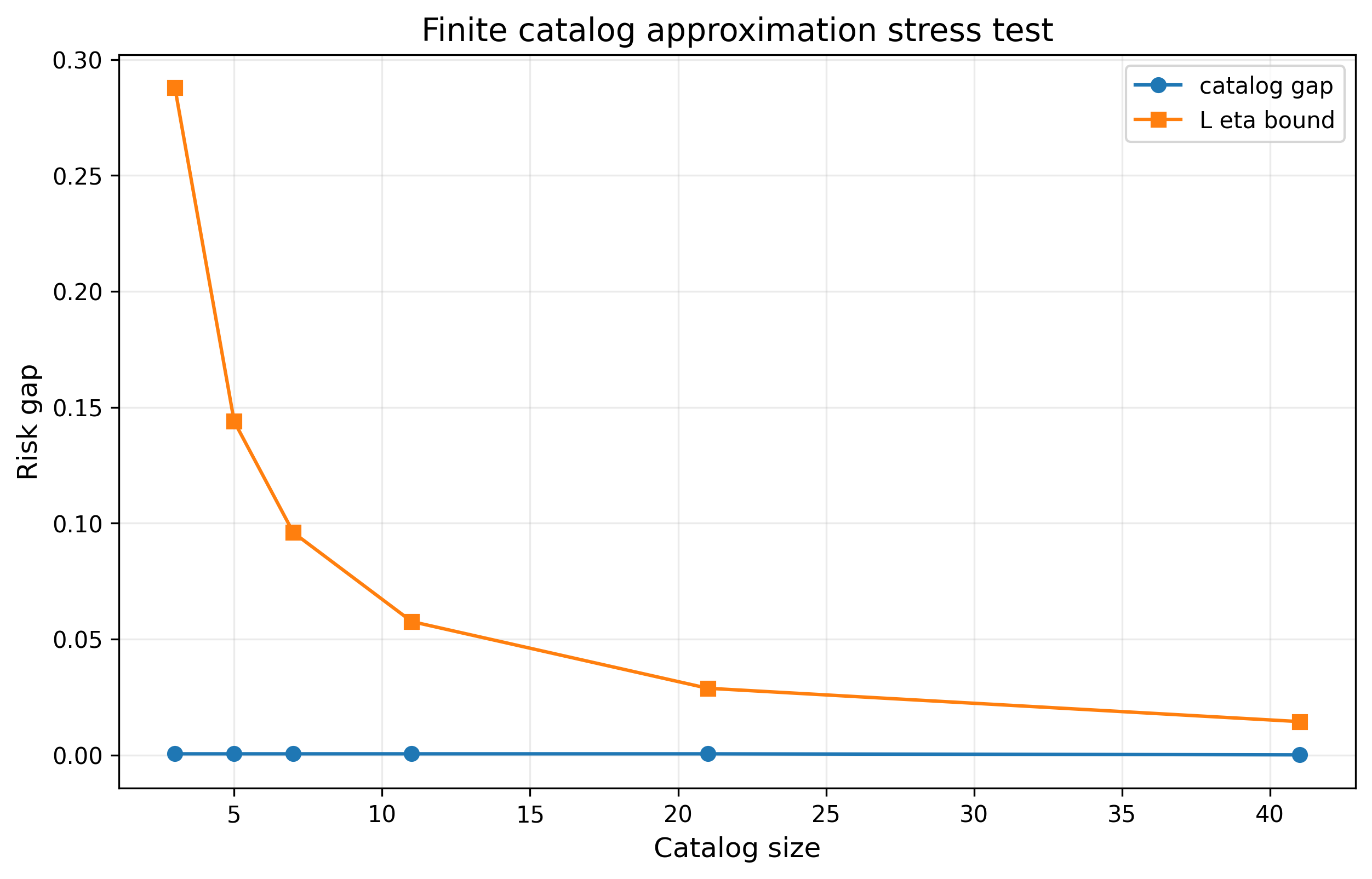}
\caption{Catalog approximation}
\end{subfigure}
\hfill
\begin{subfigure}{0.48\linewidth}
\centering
\includegraphics[width=\linewidth]{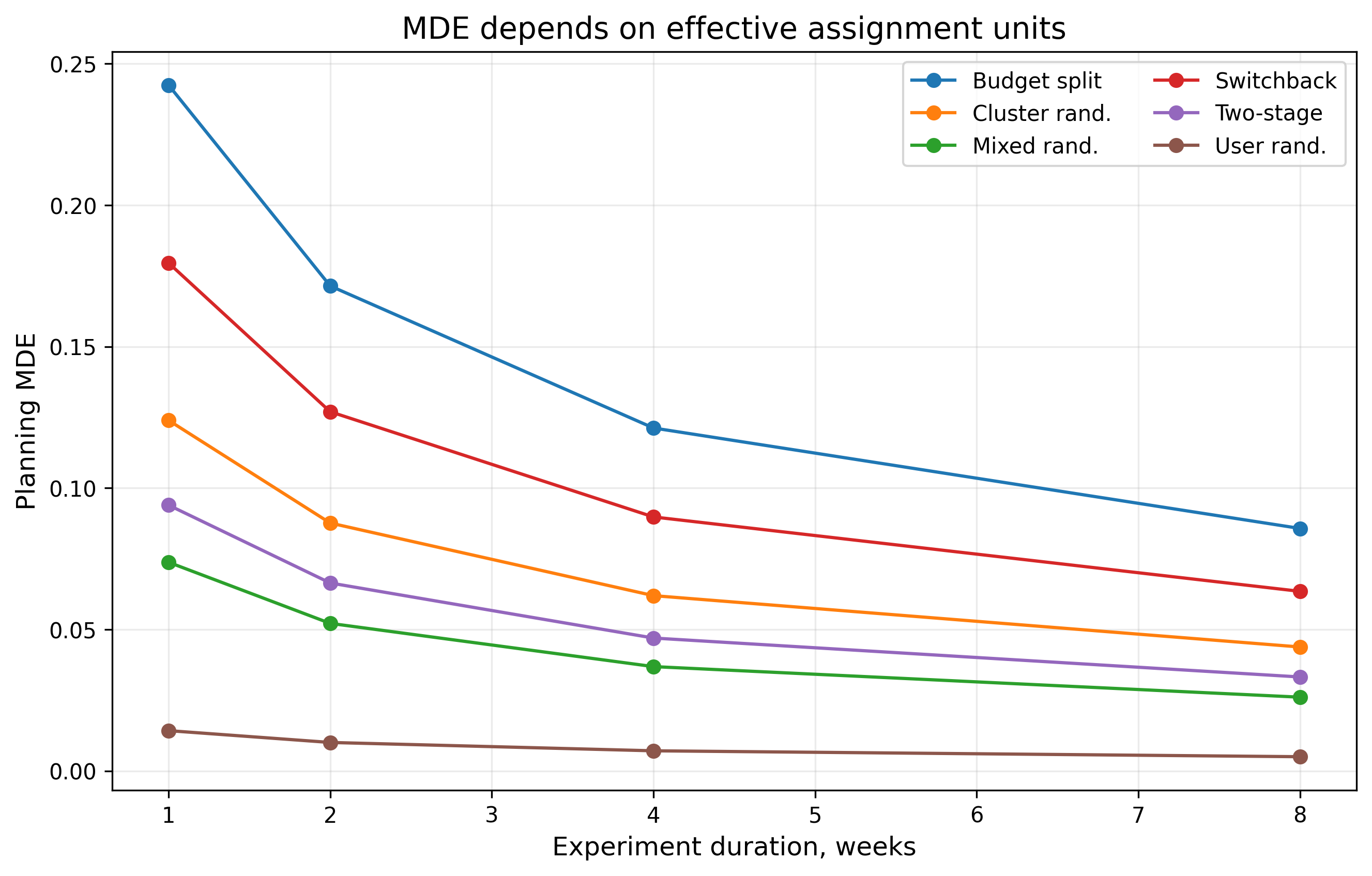}
\caption{Assignment-unit MDE}
\end{subfigure}
\caption{\small Theory stress tests. Panel~(a) checks the transport bias bound by plotting observed exposure-response bias against \(L\,W_1(P_d^\theta,P_\star^\theta)\). Panel~(b) checks minimax tightness by plotting the ratio of the attained Lipschitz gap to the minimax penalty; the curves for different \(L\) overlap at one because the constructed response attains equality. Panel~(c) compares finite-catalog approximation error with the Lipschitz-net upper bound. Panel~(d) shows assignment-unit MDE as a function of duration and effective assignment units.}
\label{fig:appendix-theory-checks}
\end{figure}

\section{Foundational Results and Proofs}

\subsection{No uniformly best default design}

The main text focuses on computable geometry-aware results. The following foundational lemma explains why a single default design is not defensible across all interference regimes unless it dominates every competitor on every relevant risk component.

\begin{lemma}[No uniformly best design without componentwise dominance]
\label{lem:no-universal-design}
Let each feasible design \(d\in\D_{\mathrm{feas}}\) have a nonnegative risk-vector
\[
q(d;\theta)
=
\big(\widetilde G_d(\theta),\widetilde V_d(\theta),\widetilde M_d(T;\theta),\widetilde C_d(\theta),\widetilde O_d,\widetilde E_d(\theta)\big)
\]
for every mechanism \(\theta\in\U\). Suppose a design \(d_0\) minimizes \(w^\top q(d;\theta)\) over \(\D_{\mathrm{feas}}\) for every \(\theta\in\U\) and every nonnegative weight vector \(w\). Then \(d_0\) componentwise dominates every other design, meaning \(q_k(d_0;\theta)\le q_k(d;\theta)\) for every component \(k\), every design \(d\), and every mechanism \(\theta\).
\end{lemma}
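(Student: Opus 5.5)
The plan is to exploit the freedom in the weight vector \(w\) by plugging in the standard basis vectors. Fix a mechanism \(\theta\in\U\) and a component index \(k\in\{1,\ldots,6\}\). Take \(w=e_k\), the weight vector with a one in coordinate \(k\) and zeros elsewhere. It is nonnegative, so the hypothesis applies: \(d_0\) minimizes \(e_k^\top q(d;\theta)=q_k(d;\theta)\) over \(\D_{\mathrm{feas}}\). That is exactly
\[
q_k(d_0;\theta)\le q_k(d;\theta)
\]
for every \(d\in\D_{\mathrm{feas}}\). Since \(k\) and \(\theta\) were arbitrary, this gives componentwise dominance over every design, component, and mechanism, which is the claim.

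The steps, in order, are as follows.
\begin{enumerate}
\item State explicitly that the hypothesis quantifies over all \(w\in\R^6_{\ge0}\), and that each \(e_k\) belongs to this set.
\item Observe that \(w^\top q(d;\theta)\) with \(w=e_k\) is the single coordinate \(q_k(d;\theta)\). For \(k=5\) this coordinate is \(\widetilde O_d\), which does not depend on \(\theta\); that case is harmless.
\item Conclude by quantifying over \(k\), \(\theta\) and \(d\).
\end{enumerate}

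Nonnegativity of the risk vector is not needed for this direction. It is used only to keep the comparison within the paper's planning-risk setting. I would also remark that the converse holds trivially: a componentwise dominating design minimizes every nonnegative weighted sum. This makes the lemma an equivalence and matches its intended reading that there is no universal default without dominance.

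The only potential obstacle is whether the paper intends \(w\) to range over strictly positive weights, or over weights normalized as in Eq.~\eqref{eq:design-risk}. If only strictly positive \(w\) were allowed, I would instead use \(w^{(\varepsilon)}=e_k+\varepsilon\mathbf 1\). From \(w^{(\varepsilon)\top}q(d_0;\theta)\le w^{(\varepsilon)\top}q(d;\theta)\) for every \(\varepsilon>0\), I would let \(\varepsilon\to0\), which is legitimate because the risk components are finite real numbers. This recovers \(q_k(d_0;\theta)\le q_k(d;\theta)\). Normalization of \(w\), for example to sum to one, does not affect argmins, since rescaling preserves them. So the argument goes through under either reading.
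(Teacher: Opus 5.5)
Your proof is correct and is essentially the paper's argument. The paper also plugs in the weight vector with a one on component \(k\) and zeros elsewhere, but it phrases this as a contradiction with the assumption \(q_k(d_0;\theta)>q_k(d;\theta)\), whereas you read the inequality off directly; your remarks on the converse and on the strictly-positive-weight variant via \(e_k+\varepsilon\mathbf 1\) are also correct, though not needed for the statement as written.
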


\begin{proof}
Fix any mechanism \(\theta\), design \(d\), and component \(k\). If \(q_k(d_0;\theta)>q_k(d;\theta)\), choose the nonnegative weight vector \(w\) with weight one on component \(k\) and zero on every other component. Then \(w^\top q(d;\theta)<w^\top q(d_0;\theta)\), contradicting the assumed optimality of \(d_0\) for all nonnegative weights. Therefore \(q_k(d_0;\theta)\le q_k(d;\theta)\) for all \(k,d,\theta\).
\end{proof}

\subsection{Proof of Theorem~\ref{thm:geometry-bias}}

% \begin{proof}
% For a fixed mechanism \(\theta\), the representation assumptions give
% \[
% \tau_d(\theta)=\int r_\theta(\phi)\,dP_d^\theta(\phi),
% \qquad
% \left|
% \tau^\star(\theta)
% -
% \int r_\theta(\phi)\,dP_\star^\theta(\phi)
% \right|
% \le E_d(\theta).
% \]
% Therefore
% \[
% |B_d(\theta)|
% \le
% \left|
% \int r_\theta\,dP_d^\theta-\int r_\theta\,dP_\star^\theta
% \right|
% +E_d(\theta).
% \]
% By the Kantorovich--Rubinstein dual representation of Wasserstein-1 distance, for any \(L_\theta\)-Lipschitz function \(r_\theta\),
% \[
% \left|
% \int r_\theta\,dP_d^\theta-\int r_\theta\,dP_\star^\theta
% \right|
% \le
% L_\theta \Wass(P_d^\theta,P_\star^\theta).
% \]
% Since \(G_d(\theta)=\Wass(P_d^\theta,P_\star^\theta)\), the result follows.
% \end{proof}

\begin{proof}
The proof has two steps. First, we separate the bias of the design estimand from any residual mismatch between the launch estimand and the exposure-response representation. Second, we control the design-estimand difference by Wasserstein distance.

By definition,
\[
B_d(\theta)
=
\tau_d(\theta)-\tau^\star(\theta).
\]
Add and subtract the launch exposure-response functional
\[
\int r_\theta(\phi)\,dP_\star^\theta(\phi)
\]
to obtain
\[
B_d(\theta)
=
\left[
\tau_d(\theta)
-
\int r_\theta(\phi)\,dP_\star^\theta(\phi)
\right]
+
\left[
\int r_\theta(\phi)\,dP_\star^\theta(\phi)
-
\tau^\star(\theta)
\right].
\]
Using the assumed representation
\[
\tau_d(\theta)
=
\int r_\theta(\phi)\,dP_d^\theta(\phi),
\]
this becomes
\[
B_d(\theta)
=
\left[
\int r_\theta(\phi)\,dP_d^\theta(\phi)
-
\int r_\theta(\phi)\,dP_\star^\theta(\phi)
\right]
+
\left[
\int r_\theta(\phi)\,dP_\star^\theta(\phi)
-
\tau^\star(\theta)
\right].
\]
Taking absolute values and applying the triangle inequality gives
\[
|B_d(\theta)|
\le
\left|
\int r_\theta(\phi)\,dP_d^\theta(\phi)
-
\int r_\theta(\phi)\,dP_\star^\theta(\phi)
\right|
+
\left|
\int r_\theta(\phi)\,dP_\star^\theta(\phi)
-
\tau^\star(\theta)
\right|.
\]
The second term is bounded by \(E_d(\theta)\) by assumption. For the first term, use the Kantorovich--Rubinstein dual representation of Wasserstein-1 distance. Since \(r_\theta\) is \(L_\theta\)-Lipschitz,
\[
\left|
\int r_\theta(\phi)\,dP_d^\theta(\phi)
-
\int r_\theta(\phi)\,dP_\star^\theta(\phi)
\right|
\le
L_\theta W_1(P_d^\theta,P_\star^\theta).
\]
By definition,
\[
G_d(\theta)
=
W_1(P_d^\theta,P_\star^\theta).
\]
Therefore,
\[
|B_d(\theta)|
\le
L_\theta G_d(\theta)+E_d(\theta).
\]

It remains only to connect this estimand bound to the estimator. The theorem assumes that
\[
\mathbb E_\theta[\widehat \tau_d]=\tau_d(\theta),
\]
so the estimator has no additional design-specific bias relative to the design estimand. Hence the bias controlled above is exactly the mismatch between the expected experimental estimand and the launch estimand. This proves the claim.
\end{proof}

\subsection{Proof of Theorem~\ref{thm:geometry-necessity}}

% \begin{proof}
% The Kantorovich--Rubinstein dual representation of Wasserstein-1 distance on a Polish metric space with finite first moments gives
% \[
% \Wass(P,Q)
% =
% \sup_{r\in\mathcal F_1}
% \left|
% \int r\,dP-\int r\,dQ
% \right|,
% \]
% where \(\mathcal F_1\) is the set of all 1-Lipschitz measurable functions. Scaling the function class by \(L\) gives
% \[
% \sup_{r\in\mathcal F_L}
% \left|
% \int r\,dP-\int r\,dQ
% \right|
% =
% L\Wass(P,Q).
% \]
% For the residual estimand-mismatch statement, the upper bound follows from the triangle inequality:
% \[
% \left|
% \int r_\theta\,dP_d^\theta
% -
% \int r_\theta\,dP_\star^\theta
% -
% \Delta_E
% \right|
% \le
% L_\theta G_d(\theta)+E_d(\theta).
% \]
% For the reverse inequality, fix any \(\eta>0\). By the first part, there exists \(r_\eta\in\mathcal F_{L_\theta}\) such that
% \[
% \left|
% \int r_\eta\,dP_d^\theta
% -
% \int r_\eta\,dP_\star^\theta
% \right|
% \ge
% L_\theta G_d(\theta)-\eta .
% \]
% Choose \(\Delta_E\) with magnitude \(E_d(\theta)\) and sign opposite to the displayed integral difference. Then
% \[
% \left|
% \int r_\eta\,dP_d^\theta
% -
% \int r_\eta\,dP_\star^\theta
% -
% \Delta_E
% \right|
% \ge
% L_\theta G_d(\theta)+E_d(\theta)-\eta .
% \]
% Letting \(\eta\downarrow0\) proves equality.
% \end{proof}
\begin{proof}
The proof follows from the Kantorovich--Rubinstein dual representation of the Wasserstein-1 distance, plus a simple scaling argument and an adversarial residual-mismatch argument.

Because \(P\) and \(Q\) have finite first moments on the metric space \((\M,\rho)\), the Wasserstein-1 distance is finite and admits the dual representation
\[
W_1(P,Q)
=
\sup_{f\in\mathcal F_1}
\left\{
\int f\,dP-\int f\,dQ
\right\},
\]
where
\[
\mathcal F_1
=
\{f:\M\to\R:\ |f(x)-f(y)|\le \rho(x,y)\ \text{for all }x,y\in\M\}
\]
is the class of 1-Lipschitz functions. Since \(f\in\mathcal F_1\) implies \(-f\in\mathcal F_1\), the same dual representation can be written with an absolute value:
\[
W_1(P,Q)
=
\sup_{f\in\mathcal F_1}
\left|
\int f\,dP-\int f\,dQ
\right|.
\]

Now consider the \(L\)-Lipschitz class
\[
\mathcal F_L
=
\{r:\M\to\R:\ |r(x)-r(y)|\le L\rho(x,y)\ \text{for all }x,y\in\M\}.
\]
If \(r\in\mathcal F_L\), then \(f=r/L\in\mathcal F_1\). Conversely, if \(f\in\mathcal F_1\), then \(r=Lf\in\mathcal F_L\). Therefore,
\[
\sup_{r\in\mathcal F_L}
\left|
\int r\,dP-\int r\,dQ
\right|
=
\sup_{f\in\mathcal F_1}
\left|
\int Lf\,dP-\int Lf\,dQ
\right|.
\]
Pulling out the constant \(L\),
\[
\sup_{r\in\mathcal F_L}
\left|
\int r\,dP-\int r\,dQ
\right|
=
L
\sup_{f\in\mathcal F_1}
\left|
\int f\,dP-\int f\,dQ
\right|.
\]
By the Kantorovich--Rubinstein duality above, this equals
\[
L W_1(P,Q).
\]
This proves the first claim.

For the second claim, set
\[
D(r_\theta)
=
\int r_\theta\,dP_d^\theta
-
\int r_\theta\,dP_\star^\theta .
\]
The residual term \(\Delta_E\) is allowed to range over all values satisfying
\[
|\Delta_E|\le E_d(\theta).
\]
For any fixed \(r_\theta\), we have
\[
\sup_{|\Delta_E|\le E_d(\theta)}
\left|
D(r_\theta)-\Delta_E
\right|
=
|D(r_\theta)|+E_d(\theta).
\]
Indeed, if \(D(r_\theta)\ge 0\), choose \(\Delta_E=-E_d(\theta)\); if \(D(r_\theta)<0\), choose \(\Delta_E=E_d(\theta)\). In either case the residual mismatch is chosen with the sign that maximizes the absolute discrepancy.

Therefore,
\[
\sup_{\substack{r_\theta\in\mathcal F_{L_\theta}\\ |\Delta_E|\le E_d(\theta)}}
\left|
\int r_\theta\,dP_d^\theta
-
\int r_\theta\,dP_\star^\theta
-
\Delta_E
\right|
=
\sup_{r_\theta\in\mathcal F_{L_\theta}}
|D(r_\theta)|
+
E_d(\theta).
\]
Applying the first part of the theorem with
\[
P=P_d^\theta,\qquad
Q=P_\star^\theta,\qquad
L=L_\theta,
\]
gives
\[
\sup_{r_\theta\in\mathcal F_{L_\theta}}
|D(r_\theta)|
=
L_\theta W_1(P_d^\theta,P_\star^\theta).
\]
By definition,
\[
G_d(\theta)
=
W_1(P_d^\theta,P_\star^\theta).
\]
Hence,
\[
\sup_{\substack{r_\theta\in\mathcal F_{L_\theta}\\ |\Delta_E|\le E_d(\theta)}}
\left|
\int r_\theta\,dP_d^\theta
-
\int r_\theta\,dP_\star^\theta
-
\Delta_E
\right|
=
L_\theta G_d(\theta)+E_d(\theta).
\]

This shows that the Wasserstein exposure-distance penalty is not merely sufficient. Over the class of \(L_\theta\)-Lipschitz exposure-response functions, it is also worst-case sharp.
\end{proof}

\subsection{Proof of Proposition~\ref{prop:catalog-approximation}}

\begin{proof}
The proof has two steps. First, the finite catalog must contain a design close to any design in the larger feasible class. Second, Lipschitz continuity of the robust risk converts closeness in design space into closeness in risk.

Recall that
\[
\mathcal R_{\mathrm{rob}}(d)
=
\sup_{\theta\in\U}\mathcal R_T^{\mathrm{geo}}(d;\theta).
\]
By assumption, \(\mathcal R_{\mathrm{rob}}\) is \(L_{\mathfrak D}\)-Lipschitz on \((\mathfrak D,\rho_{\mathfrak D})\). Thus, for any two designs \(d,d'\in\mathfrak D\),
\[
\left|
\mathcal R_{\mathrm{rob}}(d)
-
\mathcal R_{\mathrm{rob}}(d')
\right|
\le
L_{\mathfrak D}\rho_{\mathfrak D}(d,d').
\]
This condition is natural here. For example, it holds whenever each component of the normalized planning risk changes Lipschitzly with the design and the constants are uniform over \(\theta\in\U\). Since the robust risk is a supremum over mechanisms, a uniform Lipschitz bound is preserved under the supremum.

Let
\[
R^\star
=
\inf_{d\in\mathfrak D}\mathcal R_{\mathrm{rob}}(d).
\]
The infimum need not be attained, so fix an arbitrary \(\delta>0\). By the definition of infimum, there exists a design \(d^\delta\in\mathfrak D\) such that
\[
\mathcal R_{\mathrm{rob}}(d^\delta)
\le
R^\star+\delta.
\]
Because \(\D_{\mathrm{cat}}\) is an \(\eta\)-net of \(\mathfrak D\), there exists a catalog design \(d_{\mathrm{cat}}^\delta\in\D_{\mathrm{cat}}\) satisfying
\[
\rho_{\mathfrak D}(d^\delta,d_{\mathrm{cat}}^\delta)
\le
\eta.
\]
Applying Lipschitz continuity gives
\[
\mathcal R_{\mathrm{rob}}(d_{\mathrm{cat}}^\delta)
\le
\mathcal R_{\mathrm{rob}}(d^\delta)
+
L_{\mathfrak D}
\rho_{\mathfrak D}(d^\delta,d_{\mathrm{cat}}^\delta).
\]
Using the \(\eta\)-net property,
\[
\mathcal R_{\mathrm{rob}}(d_{\mathrm{cat}}^\delta)
\le
\mathcal R_{\mathrm{rob}}(d^\delta)
+
L_{\mathfrak D}\eta.
\]
Using the near-optimality of \(d^\delta\),
\[
\mathcal R_{\mathrm{rob}}(d_{\mathrm{cat}}^\delta)
\le
R^\star+\delta+L_{\mathfrak D}\eta.
\]
Since the best catalog design has risk no larger than any particular catalog design,
\[
\min_{d\in\D_{\mathrm{cat}}}
\mathcal R_{\mathrm{rob}}(d)
\le
\mathcal R_{\mathrm{rob}}(d_{\mathrm{cat}}^\delta)
\le
R^\star+\delta+L_{\mathfrak D}\eta.
\]
Finally, \(\delta>0\) was arbitrary. Letting \(\delta\downarrow0\) yields
\[
\min_{d\in\D_{\mathrm{cat}}}
\mathcal R_{\mathrm{rob}}(d)
\le
\inf_{d\in\mathfrak D}
\mathcal R_{\mathrm{rob}}(d)
+
L_{\mathfrak D}\eta.
\]
This proves the claim.
\end{proof}

\subsection{Proof of Proposition~\ref{prop:regime-threshold}}

\begin{proof}
Under the stated surrogate risk expression, design \(d_2\) has smaller surrogate risk than \(d_1\) exactly when
\begin{align*}
&w_g\gamma g_2+w_v\bar V_2+w_m\bar M_2(T)+w_c\bar C_2+w_o\bar O_2+w_e\bar E_2\\
&\hspace{2em}<
w_g\gamma g_1+w_v\bar V_1+w_m\bar M_1(T)+w_c\bar C_1+w_o\bar O_1+w_e\bar E_1.
\end{align*}
Moving all non-\(\gamma\) terms to the right side and collecting the \(\gamma\) terms gives
\[
w_g\gamma(g_1-g_2)
>
w_v(\bar V_2-\bar V_1)+w_m(\bar M_2(T)-\bar M_1(T))+w_c(\bar C_2-\bar C_1)+w_o(\bar O_2-\bar O_1)+w_e(\bar E_2-\bar E_1).
\]
Since \(g_2<g_1\) and the stated condition requires \(w_g(g_1-g_2)>0\), division yields the stated threshold.
\end{proof}

\subsection{Proof of Theorem~\ref{thm:geometry-robust-selector}}

\begin{proof}
First bound the uniform error in the geometry-aware risk surface. By the componentwise error bounds and the nonnegative planning weights,
\begin{align*}
&\left|
\widehat{\mathcal R}_T^{\mathrm{geo}}(d;\theta)
-\mathcal R_T^{\mathrm{geo}}(d;\theta)
\right|\\
&\quad\le
w_g|\widetilde G_d(\theta)-\bar G_d(\theta)|
+w_v|\widetilde V_d(\theta)-\bar V_d(\theta)|
+w_m|\widetilde M_d(T;\theta)-\bar M_d(T;\theta)|\\
&\qquad
+w_c|\widetilde C_d(\theta)-\bar C_d(\theta)|
+w_o|\widetilde O_d-\bar O_d|
+w_e|\widetilde E_d(\theta)-\bar E_d(\theta)|\\
&\quad\le
\epsilon_T
\end{align*}
uniformly over \(d\in\D_{\mathrm{cat}}\) and \(\theta\in\U\).

Let \(Q(d)=\sup_{\theta\in\U}\mathcal R_T^{\mathrm{geo}}(d;\theta)\) and \(\widehat Q(d)=\sup_{\theta\in\U}\widehat{\mathcal R}_T^{\mathrm{geo}}(d;\theta)\). The uniform risk bound implies \(|\widehat Q(d)-Q(d)|\le\epsilon_T\) for all \(d\). Hence
\[
Q(\widehat d_T)
\le
\widehat Q(\widehat d_T)+\epsilon_T
\le
\widehat Q(d_T^\star)+\epsilon_T
\le
Q(d_T^\star)+2\epsilon_T,
\]
which is the desired excess-risk bound.

For exact recovery, suppose \(\D_{\mathrm{cat}}\setminus\D^\star\) is nonempty and \(\Delta_{\mathrm{sep}}>2\epsilon_T\). If \(\widehat d_T\notin\D^\star\), then
\[
Q(\widehat d_T)-Q^\star\ge \Delta_{\mathrm{sep}}>2\epsilon_T,
\]
contradicting the excess-risk bound. Hence \(\widehat d_T\in\D^\star\). If \(\D_{\mathrm{cat}}=\D^\star\), the claim is immediate.

For the shortlist guarantee, let \(d\in\widehat{\mathcal S}_\kappa\). Since \(\widehat d_T\) minimizes \(\widehat Q\),
\[
Q(d)
\le
\widehat Q(d)+\epsilon_T
\le
\widehat Q(\widehat d_T)+\kappa+\epsilon_T
\le
\widehat Q(d_T^\star)+\kappa+\epsilon_T
\le
Q^\star+\kappa+2\epsilon_T.
\]
Thus \(Q(d)-Q^\star\le\kappa+2\epsilon_T\). Finally,
\[
\widehat Q(d_T^\star)
\le
Q^\star+\epsilon_T
\le
Q(\widehat d_T)+\epsilon_T
\le
\widehat Q(\widehat d_T)+2\epsilon_T,
\]
so \(d_T^\star\in\widehat{\mathcal S}_{2\epsilon_T}\).
\end{proof}

\end{document}